\documentclass{article}
\usepackage[utf8]{inputenc}
\usepackage[T1]{fontenc}
\usepackage{ijcai26}

\usepackage{times}
\usepackage{soul}
\usepackage{url}
\usepackage[hidelinks]{hyperref}
\usepackage[small]{caption}
\usepackage{graphicx}
\usepackage{amsmath}
\usepackage{amsthm}
\usepackage{booktabs}
\usepackage[usenames,dvipsnames]{xcolor}

\usepackage{amsmath}
\usepackage{amsthm}
\usepackage{amssymb}
\usepackage{graphicx}
\usepackage{float}
\usepackage{enumerate}
\usepackage{multicol}
\usepackage{csquotes}
\usepackage{subcaption}
\usepackage{xparse}
\usepackage{tikz}
\usepackage{url}
\usepackage{array}
\usepackage{etoolbox}
\usepackage{nicematrix}

\usepackage{graphicx}
\usepackage{xspace}
\usepackage{xparse}
\usepackage{enumerate}
\usepackage{tabularx}
\usepackage{booktabs}
\usepackage{csquotes}
\usepackage{latexsym}
\usepackage{wasysym}
\usepackage{multicol}
\usepackage{multirow}
\usepackage{subcaption}
\usepackage{arydshln}
\usepackage{rotating}
\usepackage{stmaryrd}

\usepackage{hyperref}
\hypersetup{hidelinks}
\usepackage{relsize}
\usepackage[export]{adjustbox}

\usepackage{microtype}
\usepackage[noend,ruled,vlined]{algorithm2e}

\usepackage{tikz}
\usetikzlibrary{fadings}
\usetikzlibrary{shapes,decorations,positioning}
\usetikzlibrary{fadings,shapes,decorations,positioning,calc}
\usetikzlibrary{decorations.markings,decorations.pathreplacing}
\usetikzlibrary{shapes.geometric,automata,positioning}
\usetikzlibrary{shadows}\usetikzlibrary{calc}
\usetikzlibrary{decorations,decorations.pathmorphing,decorations.pathreplacing}
\usetikzlibrary{calc}
\usetikzlibrary{tikzmark}

\newcolumntype{L}[1]{>{\raggedright\let\newline\\\arraybackslash\hspace{0pt}}m{#1}}
\newcolumntype{C}[1]{>{\centering\let\newline\\\arraybackslash\hspace{0pt}}m{#1}}
\newcolumntype{R}[1]{>{\raggedleft\let\newline\\\arraybackslash\hspace{0pt}}m{#1}}

\newcommand{\tuple}[1]{\ensuremath{\langle #1\rangle}}

\DeclareDocumentCommand{\todo}{o g}{\IfNoValueTF{#1}{\begingroup\color{magenta}TODO: #2\endgroup}{\begingroup\color{magenta}#1 #2\endgroup}}

\newcommand{\acr}[1]{\textsf{\textsmaller[0.8]{#1}}}
\newcommand{\Paths}[3]{\mathcal{R}_{#3}^{#2}(#1)}
\newcommand{\Dis}{\mathit{Dis}}
\newcommand{\dbs}{\rho^\text{dbs}}

\newcommand{\vertexa}{v}
\newcommand{\vertexb}{v'}

\newcommand{\KSendexample}{}

\newcommand{\problemDisEquivalence}{\textsc{EquivDis}}
\newcommand{\problemDisStronger}{\textsc{StrongerDis}}
\newcommand{\problemGraphPathLength}{\textsc{EqualWalkCount}}
\newcommand{\problemAutomataEquivalence}{\textsc{\( \mathbb{Q} \)-Equivalence}}
\newcommand{\problemAutomataEquivalenceZ}{\textsc{\( \mathbb{Z} \)-Equivalence}}

\newcommand{\forwardspace}{\ensuremath{\mathcal{F}_{\!\!\mathcal{A}}}}
\newcommand{\baseofA}{\ensuremath{\mathcal{B}_{\!\mathcal{A}}}}

\newcommand{\KSciteauthoryear}[1]{\citeauthor{#1}~(\citeyear{#1})}

\newtheorem{theorem}{Theorem}[section]
\newtheorem{proposition}[theorem]{Proposition}
\newtheorem{lemma}[theorem]{Lemma}

\theoremstyle{definition}
\newtheorem{definition}[theorem]{Definition}
\newtheorem{example}[theorem]{Example}

\newtheorem*{observation*}{Observation}

\newcounter{afexample}
\usepackage{tikz}
\usetikzlibrary{arrows.meta, arrows}

\title{On the Complexity of the Discussion-based Semantics\\ in
Abstract Argumentation}

\author{
	Lydia Blümel$^1$ \and
     Kai Sauerwald$^1$\and
    Kenneth Skiba$^{1,2}$\And
    Matthias Thimm$^1$\\
    \affiliations
    $^1$University of Hagen \quad $^2$ University of Luxembourg
}

\begin{document}
    
\maketitle 
\pagestyle{plain} %

\begin{abstract}
We show that deciding whether an argument $a$ is stronger than an argument $b$ with respect to the discussion-based semantics of Amgoud and Ben-Naim is decidable in polynomial time. At its core, this problem is about deciding whether, for two vertices in a graph, the number of walks of each length ending in those vertices is the same. We employ results from automata theory and reduce this problem to the equivalence problem for semiring automata. This offers a new perspective on the computational complexity of ranking semantics, an area in which the complexity of many semantics remains open.
\end{abstract}

\section{Introduction}
Abstract argumentation frameworks (\acr{AAFs}) \cite{DBLP:journals/ai/Dung95} are an approach for modelling argumentative scenarios that represents arguments as vertices in a directed graph, where a directed edge represents an attack from one argument to another. Besides the classical approach for reasoning in \acr{AAFs} consisting of identifying sets of arguments (\emph{extensions}) that form a \emph{plausible} outcome of the argumentation, another important approach are \emph{ranking-based} semantics \cite{DBLP:conf/sum/AmgoudB13a,Bonzon:2016a,Bonzon:2023}. A ranking-based semantics gives a ranking (usually a total preorder) over the set of arguments of an \acr{AAF}, with the intended meaning that higher ranks indicate stronger acceptability or plausibility. Using ranking-based semantics, a more finer-grained assessment of the strength of arguments can be achieved, in contrast to classical extension-based semantics. Several different approaches for ranking-based semantics (and the related \emph{gradual semantics}) have been proposed over the years \cite{DBLP:conf/ijcai/0007PT24,Potyka:2019a,DBLP:journals/ijar/BaroniRT19,Grossi:2019,Thimm:2018a,Blumel:2022,Skiba:2021w,Skiba:2020a,Mailly:2020a,Bonzon:2018,Dondio:2018,Yun:2018a,Bonzon:2016,DBLP:conf/sum/AmgoudB13a,Cayrol:2005}, ranging from approaches founded in the topological analysis of the underlying graph structure \cite{DBLP:conf/sum/AmgoudB13a,Dondio:2018} to approaches generalising the extension-based perspective to rankings \cite{Blumel:2022}, see also \cite{Bonzon:2023} for a recent survey.

In this paper, we are addressing issues of computability and complexity of ranking-based semantics. Interestingly, very little work has been done on this aspect so far. Some works such as \cite{Delobelle:2017,Bonzon:2023} do present algorithms for some ranking-based semantics and conduct empirical evaluations, but the exact computational complexity of the even most basic ranking-based semantics from \cite{DBLP:conf/sum/AmgoudB13a} has not been settled until now. This is not so surprising, after all, the problem of deciding whether some argument $a$ is stronger than another argument $b$ wrt., in particular, the \emph{discussion-based semantics} \cite{DBLP:conf/sum/AmgoudB13a}, has some underlying combinatorial challenges when it comes to considering walks of infinite lengths in the argumentation framework. In fact, when abstracting from the application of ranking-based semantics in abstract argumentation, the underlying graph theoretical problem is quite intricate and has, to the best of our knowledge, not been considered in its generality so far. 

We consider the \emph{discussion-based semantics} \cite{DBLP:conf/sum/AmgoudB13a} in this paper as an example for a simple ranking-based semantics. We will present the formal background in Section~\ref{sec:aaf}, but this semantics, roughly, assesses an argument $a$ is stronger than an argument $b$ if the number of odd (resp.\ even) walks ending in $a$ are less (resp. more) than the corresponding number of walks ending in $b$. Implementations of this semantics, such as \cite{Bonzon:2023}, usually set some maximal length of walks to consider in order to have a terminating algorithm. However, no formal justification is known why any maximal length maintains the correctness of the algorithm. We will settle this question and show, in particular, that the problem of deciding whether an argument $a$ is stronger than an argument $b$ wrt.\ the discussion-based semantics is not only decidable but also decidable in polynomial time. In order to show this, we use results from automata theory and, in particular, about the equivalence of \emph{automata over semirings} \cite{KS_Tzeng1996,KS_BealLombardySakarovitch2005}. Our result does not only shed light on computational questions regarding the discussion-based semantics, but also on all ranking-based semantics that are defined in terms of walks in an abstract argumentation framework. Moreover, our result also solves an interesting general graph-theoretical problem that has not been considered until now, i.\,e., deciding for two given vertices whether, for each length, the number of walks of that length is the same.

\noindent The main contributions of this paper are:
\begin{itemize}    
    \item {[Interlinking Concepts]} We introduce the notion of \emph{agreement on the numbers of walks} for two vertices of a graph, which is for each length the number of walks of that length that end in these vertices is the same. This notion offers the following connections between areas:
    \begin{itemize}
        \item {[Ranking Semantics \( \to \) Graph Theory]} The agreement on the numbers of walks strongly corresponds to the equivalence of strength of arguments with respect to discussion-based semantics.
        \hfill{\color{gray}(Section~\ref{sec:graphperspective})}
        
        \item {[Graph Theory \( \to \) Semiring Automata]}  For two vertices \(v\) and \(u\), there are two semiring automata over the rationals that are equivalent if and only if \(v\) and \(u\) agree on the number of walks. \hfill{\color{gray}(Section~\ref{sec:Graph2Semiring})}
    \end{itemize}
    \item {[Complexity of {\problemGraphPathLength}]} Deciding if two vertices of a graph agree on the numbers of walks is shown to be in \textbf{PTIME}. 
    \hfill{\color{gray}(Section~\ref{sec:Graph2Semiring})}
    \item {[Complexity of {Discussion-Based Semantics}]} Deciding {\problemDisStronger}, i.\,e, if an argument is stronger than another argument with respect to discussion-based semantics, and deciding {\problemDisEquivalence}, i.\,e.,  if two arguments have the same strength with respect to discussion-based semantics, are shown to be in \textbf{PTIME}. 
    \hfill{\color{gray}(Section~\ref{sec:ComplexityDBS})}
\end{itemize}
Notably, we will derive that the length of walks one must consider in each of the aforementioned problems is bounded. This paper is the first to establish such bounds, which are central to identifying the computational complexity.

The paper is organized as follows. The main results are presented in Section~\ref{sec:graphperspective}, Section~\ref{sec:Graph2Semiring} and Section~\ref{sec:ComplexityDBS} as indicated above.
We are presenting in Section~\ref{sec:basicnotions} (basic notions), Section~\ref{sec:aaf} (abstract argumentation), and Section~\ref{sec:semiringautomatEQ} (semiring automata) the required background and notions.
We conclude in Section~\ref{sec:conclusion}.

\section{Background and Basic Notions}
\label{sec:basicnotions}

With \( \mathbb{N}=\{1,2,3,\ldots \} \)  we denote the set of positive integers
and 
denote by \( \mathbb{N}_0=\mathbb{N}\cup\{0\} \) the set of non-negative integers.
The semiring of rational numbers is the tuple \( \tuple{\mathbb{Q},\cdot,+,0,1} \), where the product \enquote{\( \cdot \)} of the semiring is the usual multiplication on \( \mathbb{Q} \) and the sum \enquote{$+$} is the usual addition on \( \mathbb{Q} \), the neutral element of the product is $1$ and $0$ is the neutral element of summation (and null-element of the product).
We will use notions and results from formal language and automata theory (see, e.g.,~\cite{KS_HopcroftMotwaniUllman2007}) and computational complexity (see, e.g.,~\cite{KS_AroraBarak2009}).
With \( \varepsilon \) we indicate the empty word.
A decision problem \( P \) is a formal language over \( \{0,1\} \), and a complexity class is a set of decision problems.
We write $P_1{\leq_{p}}P_2$ if the decision problem $P_1$ can be reduced to $P_2$ in polynomial time.
The following relationship between complexity classes is known:
\begin{equation*}
    \textbf{NC}  \subseteq \textbf{PTIME} 
    \subseteq \textbf{PSPACE}
\end{equation*}
\textbf{NC} consists of all problems decidable in polylogarithmic time on a parallel computer with a polynomial number of processors, 
\textbf{PTIME}%
consists of all problems decidable in polynomial time by a deterministic%
Turing machine,
and \textbf{PSPACE} is the class of all problems that can be solved with polynomial space by a deterministic Turing machine. 

\section{Abstract Argumentation}
\label{sec:aaf}
An \emph{abstract argumentation framework} (\acr{AAF}) \cite{DBLP:journals/ai/Dung95} is a directed graph $F=(A,R)$ where $A$ is a (finite) set of \emph{arguments} and $R$ is a relation $R\subseteq A \times A$. An argument $a$ is said to \emph{attack} an argument $b$ if $(a,b) \in R $. For a set $E\subseteq A$ we define
\begin{align*}
    E^+_F & = \{a\in A\mid \exists b\in E, bR a\} \\
    E^-_F & = \{a\in A\mid \exists b\in E, aR b\}
\end{align*}
If $E$ is a singleton set, i.\,e., $E=\{a\}$ for some $a\in A$, then we also just write $a^+_F$ (resp.\ $a^-_F$) for $\{a\}^+_F$ (resp.\ $\{a\}^-_F$). If the AF is clear in the context, we will omit the index.

A \emph{ranking-based semantics} \cite{DBLP:conf/sum/AmgoudB13a} is defined as follows.
\begin{definition}
A \emph{ranking-based semantics} $\rho$ is a function which maps an \acr{AAF} $F=(A,R)$ to a preorder $\succeq_{F}^{\rho}$ on $A$.   
\end{definition}
Intuitively $a \succeq_{F}^{\rho} b$ means that $a$ is at least as strong as $b$ in $F$.
We define the usual abbreviations as follows;
$a \succ^{\rho}_{F} b$ denotes \emph{strictly stronger}, i.\,e. $a \succeq^{\rho}_{F} b$ and $b \not\succeq^{\rho}_{F} a$. 
Moreover, $a \simeq^{\rho}_{F} b$ denotes \emph{equally strong}, i.\,e. $a \succeq^{\rho}_{F} b$ and $b \succeq^{\rho}_{F} a$. 

Many different ranking-based semantics exist, see \cite{Bonzon:2023} for a survey, but in this paper, we will only consider the \emph{discussion-based semantics} form \citeauthor{DBLP:conf/sum/AmgoudB13a} (\citeyear{DBLP:conf/sum/AmgoudB13a}).
Very central to this semantics is the notion of length of walks, from graph theory, which we define next.
\begin{definition}
	Let $F=(A,R)$ be an \acr{AAF}, $x\in A$ and $i>0$.
	The set of walks of length $i$ in $F$ that end in $x$ is denoted by $\Paths{x}{F}{i}$ and defined by:
	\begin{align*}
		\Paths{x}{F}{1} & = \{ (y,x) \mid y\in x^-_F\}\\
		\Paths{x}{F}{i} & = \bigcup_{y\in x^-_F} \{(z_1,\ldots,z_{i-1},y,x)\mid\\
		&\qquad(z_1,\ldots,z_{i-1},y)\in \Paths{y}{F}{i-1}\} 
	\end{align*}
	for all $i>1$.
\end{definition}
 The intuition behind the discussion-based semantics is that an argument is stronger if it has less \emph{attack walks}, i.\,e., walks of odd length that end in the argument, and more \emph{defense walks}, i.\,e., walks of even length that end in the argument. This is formalised by \citeauthor{DBLP:conf/sum/AmgoudB13a} (\citeyear{DBLP:conf/sum/AmgoudB13a}) as follows.
\begin{definition}
    Let $F=(A,R)$ be an \acr{AAF}, $x\in A$ and $i>0$. The \emph{discussion count} $\Dis(x)$ is an infinite vector of natural numbers $\Dis^F(x)=\langle \Dis^F_1(x),\Dis^F_2(x),\ldots\rangle$  with
    \begin{align*}
        \Dis^F_i(x) & = \left\{\begin{array}{ll}
                -|\Paths{x}{F}{i}| & \text{if $i$ is odd}\\
                |\Paths{x}{F}{i}| & \text{if $i$ is even}
            \end{array}\right.
    \end{align*}
\end{definition}
Now an argument $a$ is stronger than an argument $b$ if the discussion count of $a$ is \emph{lexicographically larger} than the discussion count of $b$. That is, for two (possibly infinite) vectors $\vec{x}=\langle x_1,x_2,\ldots,\rangle$ and $\vec{y}=\langle y_1,y_2,\ldots,\rangle$ we have $\vec{x}\geq^\text{lex}\vec{y}$ if either $x_i=y_i$ for all $i>0$ or there is $k>0$ with $x_k>y_k$ and $x_j=y_j$ for all $j=1,\ldots,k-1$.
\begin{definition}
    Let $F=(A,R)$ be an \acr{AAF}. The \emph{dicussion-based} semantics $\dbs$ is defined via $\succeq^\text{dbs}_F=\dbs(F)$ with
    \begin{align*}
        a\, \succeq^\text{dbs}_F\, b \text{~iff~} \Dis^F(a)\geq^\text{lex}\Dis^F(b)
    \end{align*}
    for all $a,b\in A$.
\end{definition}
\begin{figure*}[t]
	\begin{minipage}[b]{0.32\textwidth}
		\centering
		\scalebox{1}{
			\begin{tikzpicture}
				
				\node (a) at (0,0) [circle, draw,minimum size= 0.65cm] {$a$};
				\node (b) at (-1,-1) [circle, draw,minimum size= 0.65cm] {$b$};
				\node (c) at (-1,-2) [circle, draw,minimum size= 0.65cm] {$c$};
				
				\node (d) at (0,-1) [circle, draw,minimum size= 0.65cm] {$d$};
				\node (e) at (1,-1) [circle, draw,minimum size= 0.65cm] {$e$};
				\node (f) at (1,-2) [circle, draw,minimum size= 0.65cm] {$f$};
				\node (g) at (1,-3) [circle, draw,minimum size= 0.65cm] {$g$};

				\path[->] (a) edge (b);
				\path[->] (a) edge (d);
				\path[->] (a) edge (e);
				\path[->] (b) edge (c);
				\path[->] (d) edge (f);
				\path[->] (e) edge (f);
				\path[->] (f) edge (g);

		\end{tikzpicture}}\\[0.5em]
	\begin{tabular}{l|ccc}
		\( x \) & \( \Dis_1(x) \) & \( \Dis_2(x) \) & \( \Dis_3(x) \) \\ \hline
		\( a \) &    \( 0 \)      &     \( 0 \)     &     \( 0 \)     \\
		\( b \) &    \( -1 \)     &     \( 0 \)     &     \( 0 \)     \\
		\( c \) &    \( -1 \)     &     \( 1 \)      &     \( 0 \)      \\
		\( d \) &    \( -1 \)      &     \( 0 \)      &     \( 0 \)      \\
		\( e \) &    \( -1 \)      &     \( 0 \)      &     \( 0 \)      \\
		\( f \) &    \( -2 \)      &     \( 2 \)      &     \( 0 \)      \\
		\( g \) &    \( -1 \)      &     \( 2 \)      &     \( -2 \)
	\end{tabular}
		\caption{\acr{AAF} $F_{\ref{afex:simple_ex}}$ from Example \ref{ex:simple_ex} and\\ the respective discussion count until \( 3 \).}
		\label{tikz:simple_ex}
	\end{minipage}
	\begin{minipage}[b]{0.32\textwidth}
		\centering
		\scalebox{1}{
			\begin{tikzpicture}
				
				\node (a) at (0,0) [circle, draw,minimum size= 0.65cm] {$a$};
				\node (b) at (0,-1) [circle, draw,minimum size= 0.65cm] {$b$};
				\node (c) at (0,1) [circle, draw,minimum size= 0.65cm] {$c$};
				
				\node (d) at (-1,-0.5) [circle, draw,minimum size= 0.65cm] {$d$};
				\node (e) at (1,-0.5) [circle, draw,minimum size= 0.65cm] {$e$};
				\node (f) at (-1,0.5) [circle, draw,minimum size= 0.65cm] {$f$};
				\node (g) at (1,0.5) [circle, draw,minimum size= 0.65cm] {$g$};

				\path[->] (a) edge (e);
				\path[->] (a) edge (d);
				\path[->] (a) edge (g);
				\path[->] (a) edge (f);
				
				\path[->] (b) edge (a);
				\path[->] (c) edge (a);
				
				\path[->] (d) edge (b);
				\path[->] (e) edge (b);
				\path[->] (f) edge (c);
				\path[->] (g) edge (c);
				
		\end{tikzpicture}}\\[0.5em]
	\begin{tabular}{l|ccc}
	\( x \) & \( \Dis_1(x) \) & \( \Dis_2(x) \) & \( \Dis_3(x) \) \\\hline
	\( a \)           & \( -2 \)  & \( 4 \)   & \( -4 \)   \\
	\( b \)           & \( -2 \)  & \( 2 \)   & \( -4 \)   \\
	\( c \)           & \( -2 \)  & \( 2 \)   & \( -4 \)   \\
	\( d \)           & \( -1 \)  & \( 2 \)   & \( -4 \)   \\
	\( e \)           & \( -1 \)  & \( 2 \)   & \( -4 \)   \\
	\( f \)           & \( -1 \)  & \( 2 \)   & \( -4 \)   \\
	\( g \)           & \( -1 \)  & \( 2 \)   & \( -4 \)
\end{tabular}
		\caption{\acr{AAF} $F_{\ref{afex:double_loop}}$ from Example \ref{ex:double_loop} and\\ the respective discussion count until \( 3 \).}
		\label{tikz:double_loop}
	\end{minipage}
	\begin{minipage}[b]{0.32\textwidth}
	    \centering
		\begin{tikzpicture}
			
			\node (a) at (0,0) [circle, draw,minimum size= 0.65cm] {$h$};
			\node (b) at (0,-1) [circle, draw,minimum size= 0.65cm] {$j$};
			\node (c) at (0,1) [circle, draw,minimum size= 0.65cm] {$k$};
			
			\node (d) at (-1,-0.5) [circle, draw,minimum size= 0.65cm] {$l$};
			\node (e) at (1,-0.5) [circle, draw,minimum size= 0.65cm] {$m$};
			\node (f) at (-1,0.5) [circle, draw,minimum size= 0.65cm] {$n$};
			\node (g) at (1,0.5) [circle, draw,minimum size= 0.65cm] {$o$};
			
			\coordinate (im1) at ([xshift=-0.15cm,yshift=-0.5cm]f.west);
			\coordinate (im2) at ([xshift=-0.15cm,yshift=0.25cm]f.west);
			\coordinate (im3) at ([xshift=0.1cm,yshift=0.5cm]f.west);
			
			\path[->] (a) edge (e);
			\path[->] (a) edge (d);
			\path[->] (a) edge (g);
			\path[->] (a) edge (f);
			
			\path[->] (b) edge (a);
			\path[->] (c) edge (a);
			
			\draw[-] (d) -- (im1) -- (im2) -- (im3) edge[->] (c);
			\path[->] (e) edge (b);
			\path[->] (f) edge (c);
			\path[->] (g) edge (c);
			
		\end{tikzpicture}\\[0.5em]
		\begin{tabular}{l|ccc}
			\( x \) & \( \Dis_1(x) \) & \( \Dis_2(x) \) & \( \Dis_3(x) \) \\\hline
			\( h \)           & \( -2 \)  & \( 4 \)   & \( -4 \)   \\
			\( j \)           & \( -1 \)  & \( 1 \)   & \( -2 \)   \\
			\( k \)           & \( -3 \)  & \( 3 \)   & \( -6 \)   \\
			\( l \)           & \( -1 \)  & \( 2 \)   & \( -4 \)   \\
			\( m \)           & \( -1 \)  & \( 2 \)   & \( -4 \)   \\
			\( n \)           & \( -1 \)  & \( 2 \)   & \( -4 \)   \\
			\( o \)           & \( -1 \)  & \( 2 \)   & \( -4 \)
		\end{tabular}
		\caption{\acr{AAF} $F_{\ref{afex:nonismophic}}$ from Example \ref{ex:nonismophic} and\\ the respective discussion count until \( 3 \).}
		\label{tikz:nonismophic}
	\end{minipage}
\end{figure*}
\begin{example}\label{ex:simple_ex}
Let \refstepcounter{afexample} $F_{\theafexample} = (A_{\theafexample},R_{\theafexample})$ \label{afex:simple_ex} with 
\begin{align*}
	A_{\ref{afex:simple_ex}} &= \{a,b,c,d,e,f,g\} \\
	R_{\ref{afex:simple_ex}} & = \{(a,b),(a,d),(a,e), (b,c), (d,f),(e,f),(f,g)\}
\end{align*}
be the \acr{AAF} as depicted in Figure \ref{tikz:simple_ex}.  
Argument $a$ is unattacked, so it is the strongest argument. The arguments $b$, $d$ and $e$ have the same attackers (argument $a$) and are therefore equally strong. The arguments $c$ and $g$ are both attacked by one argument each, however $g$ is defended by two arguments, while $c$ is only defended by one argument, thus $g \succ^{\text{dbs}}_{F_{\ref{afex:simple_ex}}} c$. Argument $f$ is attacked by two arguments, rendering \( f \) the weakest one according to the discussion-based semantics. The ranking is:
\begin{equation*}
	a  \succ^{\text{dbs}}_{F_{\ref{afex:simple_ex}}} g  \succ^{\text{dbs}}_{F_{\ref{afex:simple_ex}}}c  \succ^{\text{dbs}}_{F_{\ref{afex:simple_ex}}} e  \simeq^{\text{dbs}}_{F_{\ref{afex:simple_ex}}} d  \simeq^{\text{dbs}}_{F_{\ref{afex:simple_ex}}} b  \succ^{\text{dbs}}_{F_{\ref{afex:simple_ex}}} f
    \tag*{\KSendexample}
\end{equation*}    
\end{example}
In the remainder of this paper, we are interested in the following decision problems:

\begin{center}\begin{minipage}{\columnwidth}
        \noindent\problemDisEquivalence\\
\textbf{Input}: \acr{AAF} $F=(A,R)$ and \( a,b\in A \).\\
\textbf{Question:} Does $a\, \simeq^\text{dbs}_F\, b$ hold?
    \end{minipage}\\[0.5em]
\begin{minipage}{\columnwidth}
        \noindent\problemDisStronger\\
\textbf{Input}: \acr{AAF} $F=(A,R)$ and \( a,b\in A \).\\
\textbf{Question:} Does $a\, \succeq^\text{dbs}_F\, b$ hold?
    \end{minipage}
\end{center}

Note that the above questions can be easily answered if $F$ is \emph{acyclic}, since the number of all walks ending in arguments $a$ and $b$ are finite and their numbers can easily be compared in polynomial time. However, if $F$ contains cycles, the set of walks ending in $a$ and $b$, respectively, becomes infinite, with possibly a complex underlying structure.
\begin{example}\label{ex:double_loop}
Let  \refstepcounter{afexample} $F_{\theafexample} = (A_{\ref{afex:double_loop}},R_{\ref{afex:double_loop}})$\label{afex:double_loop} with 
\begin{align*}
    A_{\ref{afex:double_loop}} &= \{a,b,c,d,e,f,g\} \\
    R_{\ref{afex:double_loop}} & = \{ (a,d), (a,e), (a,f), (a,g), (b,a),\\ &\hspace{0.75cm} (c,a), (d,b), (e,b), (g,c), (f,c)\}
\end{align*}
as depicted in Figure \ref{tikz:double_loop}. 
Arguments $d,g,f$ and $g$ have the same attacker $a$, while the other three arguments have two attackers each. For argument $a$, there are four walks of length 2 and length 3 ending in that argument. 
For length 4, there are already eight walks of length 4 ending in argument $a$.
These numbers increase significantly for longer lengths.     
\hfill{\KSendexample}
\end{example}
One might guess that $a\, \simeq^\text{dbs}_F\, b$ only holds if $F$ is somewhat isomorphic about $a$ and $b$, i.\,e., the topological structure of $F$ is the same for both $a$ and $b$. This is not the case as the following example shows.
\begin{example}\label{ex:nonismophic}
Consider \refstepcounter{afexample} $F_{\theafexample} = (A_{\ref{afex:nonismophic}},R_{\ref{afex:nonismophic}})$\label{afex:nonismophic} with
\begin{align*}
	A_{\ref{afex:nonismophic}}& = \{h,i,j,k,l,m,n,o\}\\
	R_{\ref{afex:nonismophic}} & = \{ (h,l), (h,m), (h,n), (h,o), (j,h),\\ & \hspace{0.75cm}(k,h), (l,k), (m,j), (n,k), (o,k)\} \ ,
\end{align*}
which is also presented in Figure \ref{tikz:nonismophic}, we compare $F_{\ref{afex:nonismophic}}$ with  $F_{\ref{afex:double_loop}}$ from Example~\ref{ex:double_loop}.
Note that there are two walks of length $2$ to $a$ over $b$ and two walks of length $2$ to $a$ over $c$, but there is only one walk of length $2$ to $h$ over $j$ and three walks of length $2$ to $h$ over $k$.
Thus, the topological structure of \( F_{\theafexample} \) and \( F_{\ref{afex:double_loop}} \) are different, but, nonetheless, as we show at the end of this paper, ${a\, \simeq^\text{dbs}_{F_{\text{joint}}} h}$ holds for the \acr{AAF} \( F_{\text{joint}} = (A_{\ref{afex:double_loop}} \cup  A_{\ref{afex:nonismophic}},R_{\ref{afex:double_loop}} \cup R_{\ref{afex:nonismophic}}) \).
\hfill{\KSendexample}
\end{example}
The above examples may raise the question of whether there is a bound at all for a walk length $i$ for which one has to consider $\Dis^F_i(a)$ and $\Dis^F_i(b)$ in order to be sure that $a\, \simeq^\text{dbs}_F\, b$ holds.
One might argue that it should suffice to only consider walks up to length $|A|+1$ since all longer walks contain cycles and are therefore ``combinations'' of walks already seen. However, there is no apparent formal reason for such an assumption, since it remains difficult (or even indeterminate), from a combinatorial point of view, to find the exact number of walks of a certain length, given that we already know the number of walks of smaller lengths. In fact, no closed-form solution is known for deriving the number of walks of length $k$ for any $k$, given that we know the lengths of walks of length $1,\ldots,k-1$~\cite{Flajolet_Wordcounting}.

\section{Graph-Theoretical Perspective}
\label{sec:graphperspective}
We now take a more general perspective than for abstract argumentation frameworks
and consider general directed graphs and graph theory notation, where we write $G=(V,E)$ for a directed graph. As for argumentation frameworks, we denote by $\Paths{\vertexa}{G}{i} $ the walks of length $i$ in a graph $G=(V,E)$ that end in $\vertexa\in V$. If $G$ is clear from the context, we may omit \( G \).

\noindent We say that two vertices \( \vertexa,\vertexb \) of \( G \) \emph{agree on the numbers of walks} if for each length the number of walks of that length ending in these vertices is the same, i.e., \( |\Paths{\vertexa}{}{i}| = |\Paths{\vertexb}{}{i}| \) holds for all \( i \in \mathbb{N} \).
Next, we define the corresponding decision problem as follows.
\begin{center}\begin{minipage}{\columnwidth}
\noindent{\problemGraphPathLength}\\
\textbf{Input}: Graph \( G=(V,E) \) and two vertices \( \vertexa,\vertexb\in V \).\\
\textbf{Question:} Do \( \vertexa \) and \( \vertexb \) agree on the numbers of walks?
    \end{minipage}
\end{center}

There is an immediate relationship between the problem {\problemGraphPathLength} and the problem \( \problemDisEquivalence \). 
\pagebreak[3]
\begin{theorem}
    \label{thm:DisEqToGraphWalks}
	\( \problemDisEquivalence \,{\leq_{p}}\, {\problemGraphPathLength} \)
\end{theorem}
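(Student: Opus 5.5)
The reduction is the identity map on instances: given an instance $(F,a,b)$ of \problemDisEquivalence{} with $F=(A,R)$, we output the graph $G=(A,R)$ together with the vertices $\vertexa=a$ and $\vertexb=b$. This is trivially computable in polynomial (indeed linear) time. It remains to show that $a \simeq^\text{dbs}_F b$ holds if and only if $a$ and $b$ agree on the numbers of walks in $G$.

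First, $a\simeq^\text{dbs}_F b$ holds iff $\Dis^F(a)=\Dis^F(b)$, i.e.\ iff $\Dis^F_i(a)=\Dis^F_i(b)$ for all $i\in\mathbb{N}$. For the non-trivial direction, assume both $\Dis^F(a)\geq^\text{lex}\Dis^F(b)$ and $\Dis^F(b)\geq^\text{lex}\Dis^F(a)$, and suppose the two vectors differ. By the definition of $\geq^\text{lex}$, the first case does not apply, so there is a $k$ with $\Dis^F_k(a)>\Dis^F_k(b)$ and agreement on all indices below $k$. Symmetrically, there is a $k'$ with $\Dis^F_{k'}(b)>\Dis^F_{k'}(a)$ and agreement below $k'$. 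If $k<k'$, then $\Dis^F_k(a)=\Dis^F_k(b)$, contradicting the choice of $k$. The case $k'<k$ is symmetric, and $k=k'$ gives two contradictory strict inequalities. Hence the vectors are equal. The converse holds because equal vectors satisfy the first disjunct of $\geq^\text{lex}$ in both directions.

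Second, $\Dis^F_i(x)=\pm|\Paths{x}{F}{i}|$, where the sign depends only on $i$ and not on $x$. Therefore $\Dis^F_i(a)=\Dis^F_i(b)$ iff $|\Paths{a}{F}{i}|=|\Paths{b}{F}{i}|$. The sets $\Paths{x}{F}{i}$ and $\Paths{x}{G}{i}$ coincide, since $G$ and $F$ have the same vertices and edges. Combining the two steps yields the claimed equivalence.

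There is no real obstacle here. The only point requiring care is the antisymmetry-style argument for the lexicographic order, that is, that mutual $\geq^\text{lex}$ forces equality. The short case analysis on $k$ and $k'$ settles it.
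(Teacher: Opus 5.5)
Your proposal is correct and follows the paper's proof. The paper also uses the identity reduction and observes that \( a \simeq^\text{dbs}_F b \) amounts to \( \Dis^F_i(a)=\Dis^F_i(b) \) for all \( i \), which amounts to equal walk counts because the sign depends only on \( i \). The one difference is that you spell out why mutual \( \geq^\text{lex} \) forces the two discussion counts to be equal, a step the paper takes for granted.
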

\begin{proof}
	The reduction is the identity function
	\begin{equation*}
		(F,a,b) \mapsto (F,a,b)
	\end{equation*}
	which is easily computable in polynomial time.
	We have that \( { a\, \simeq^\text{dbs}_F\, b } \) holds exactly whenever \( \Dis^F_n(a) = \Dis^F_n(b) \) for all \( n\in\mathbb{N} \). The latter is the case when \( |\Paths{a}{}{n}| = |\Paths{b}{}{n}| \) hold for all \( n \in \mathbb{N} \).
\end{proof}

We establish connections between \( |\Paths{v}{}{i}| \) and other notions. 
With \( N(\vertexa) = \{ \vertexb \in V \mid (\vertexb,\vertexa) \in E \}  \) we denote the set of in-neighbours of \( \vertexa \), i.\,e., those vertices that have an edge going out that ends in \( \vertexa \). 
We define the notion
\begin{align*}
    T_{i} & :  V \to \mathbb{N}_0\\
    T_{1}(\vertexa) & = |N(\vertexa)|\\
    T_{i+1}(\vertexa) & = \sum_{\vertexb\in N(\vertexa)} T_{i}(\vertexb) \ \ \ ,
\end{align*}
for which one can show that \(  T_{i}(v)\) coincides with the number of walks of length \( i \) ending in~\( \vertexa \).

\begin{proposition}
    \label{prop:walkTn}
    For all \(\vertexa\in V\) and \( u\in\mathbb{N}\) holds:    
    \begin{equation*}
        |\Paths{\vertexa}{}{i}| = T_i(\vertexa)
    \end{equation*}
\end{proposition}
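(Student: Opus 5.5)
The plan is to prove the statement by induction on \( i \), for all vertices \( \vertexa \in V \) simultaneously. (The quantifier in the statement should read \( i\in\mathbb{N} \); the \( u \) is a typo.) The recursive definition of \( \Paths{\vertexa}{}{i} \) and the recursion for \( T_i \) both split according to the in-neighbours \( N(\vertexa)=\vertexa^- \), so the two recursions should match term by term.

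\emph{Base case \( i=1 \).} By definition, \( \Paths{\vertexa}{}{1}=\{(\vertexb,\vertexa)\mid \vertexb\in N(\vertexa)\} \). The map \( \vertexb\mapsto(\vertexb,\vertexa) \) is a bijection from \( N(\vertexa) \) onto this set. Hence \( |\Paths{\vertexa}{}{1}|=|N(\vertexa)|=T_1(\vertexa) \).

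\emph{Inductive step.} Assume \( |\Paths{\vertexb}{}{i}|=T_i(\vertexb) \) for every \( \vertexb\in V \). By definition,
\begin{equation*}
\Paths{\vertexa}{}{i+1}=\bigcup_{\vertexb\in N(\vertexa)} S_{\vertexb}, \qquad S_{\vertexb}=\{(w,\vertexa)\mid w\in\Paths{\vertexb}{}{i}\},
\end{equation*}
where \( (w,\vertexa) \) means the tuple \( w \) extended by the last entry \( \vertexa \). Two facts are needed.
\begin{enumerate}
\item The union is disjoint. Every tuple in \( S_{\vertexb} \) has second-to-last entry \( \vertexb \), so \( S_{\vertexb}\cap S_{\vertexb'}=\emptyset \) whenever \( \vertexb\neq\vertexb' \).
\item For each \( \vertexb \), the map \( w\mapsto(w,\vertexa) \) is a bijection \( \Paths{\vertexb}{}{i}\to S_{\vertexb} \), since deleting the final entry inverts it.
\end{enumerate}
Combining these with the induction hypothesis gives
\begin{equation*}
|\Paths{\vertexa}{}{i+1}|=\sum_{\vertexb\in N(\vertexa)}|\Paths{\vertexb}{}{i}|=\sum_{\vertexb\in N(\vertexa)}T_i(\vertexb)=T_{i+1}(\vertexa).
\end{equation*}

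The only step needing real care is the disjointness argument. It is what justifies replacing the cardinality of the union by the sum of the cardinalities. It also requires that walks be recorded as vertex tuples, so that the second-to-last entry identifies the in-neighbour \( \vertexb \) uniquely. Everything else is bookkeeping.
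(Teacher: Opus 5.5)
Your proof is correct and takes essentially the same route as the paper. Both argue by induction on $i$, read the base case off the definition, and split the walks of length $i+1$ ending in $v$ into a disjoint union indexed by the in-neighbour $y \in N(v)$, where disjointness comes from the second-to-last entry; you additionally spell out the bijection $w \mapsto (w,v)$ and the use of the induction hypothesis, which the paper's displayed chain compresses (it even writes $T_i(v)$ where $T_i(y)$ is meant).
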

\begin{proof}
	For \( i=1 \), this is immediate by the definitions.
	For \( i\geq 1 \), we have that 
	\begin{align*}
		& |\Paths{\vertexa}{}{i+1}|\\
		& = \bigg| \bigcup_{y\in N(\vertexa)} \{(z_1,...,z_{i},y,\vertexa)\mid(z_1,...,z_{i},y)\in \Paths{y}{}{i}\}  \bigg|
	\end{align*}
	Because every tuple \( (z_1,...,z_{i},y) \) ends with the unique element \( y \), the union is disjoint.
	Hence, we obtain the following:
	\begin{align*}
		& |\Paths{\vertexa}{}{i+1}|\\
		& =  \sum_{y\in N(\vertexa)} \bigg|\{(z_1,...,z_{i},y,\vertexa)\mid(z_1,...,z_{i},y)\in \Paths{y}{}{i}\}  \bigg|\\
		& =  \sum_{y\in N(\vertexa)} T_{i}(\vertexa)  = T_{i+1}(\vertexa) \qedhere
	\end{align*}
\end{proof}

Another representation of \( |\Paths{v}{}{i}| \) is via the adjacency matrix.
The adjacency matrix \( M \in \{0,1\}^{|V|\times|V|}  \) of \( G\) is the matrix with \( M[\vertexa,\vertexb] = 1 \) if and only if \( (\vertexa,\vertexb)\in E \); whereby, \( M[\vertexa,\vertexb] \) stands for \( M[i,j] \) presuming that $V=\{ v_1,\ldots,v_m\}$ and \( v=v_i\) and \( v' = v_j \).
We write $M[a]$ as an abbreviation for $ \sum_{b \in A} M[b,a] $.
It is well-known, see, e.g., \citeauthor{KS_TinhoferAlbrechtMayrNoltemeierSyslo1990},~\citeyear{KS_TinhoferAlbrechtMayrNoltemeierSyslo1990}, that $|\Paths{\vertexa}{}{i}|$ can be represented as the sum of the column of \( v \) in the $i$-th power of the adjacency matrix $M$.
\begin{proposition}
\label{prop:walks_matrixpotency}
    For all \( v\in V \) and \( i \in \mathbb{N} \) holds:
	\begin{equation*}		
|\Paths{\vertexa}{}{i}| = M^i[v]
	\end{equation*}
\end{proposition}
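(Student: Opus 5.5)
We reduce the claim to Proposition~\ref{prop:walkTn}. It suffices to show that $M^i[v] = T_i(v)$ for all $v\in V$ and $i\in\mathbb{N}$, where $M^i[v]=\sum_{u\in V} M^i[u,v]$ is the column sum. We prove the stronger entrywise statement: $M^i[u,v]$ equals the number of walks of length $i$ starting in $u$ and ending in $v$. Summing over all start vertices $u$ then gives exactly $|\Paths{v}{}{i}|$. The union over $u$ is disjoint because every walk has a unique first vertex.

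The entrywise statement goes by induction on $i$. For $i=1$, $M[u,v]=1$ iff $(u,v)\in E$. Such an edge is exactly the unique walk $(u,v)$ of length one, and $\Paths{v}{}{1}=\{(y,v)\mid y\in v^-\}$ matches this. For the step we use $M^{i+1}[u,v]=\sum_{w\in V} M^i[u,w]\,M[w,v]$. Here $M[w,v]$ is $1$ exactly when $w\in N(v)$, so the sum equals $\sum_{w\in N(v)} M^i[u,w]$. By the induction hypothesis this counts walks of length $i$ from $u$ to some in-neighbour $w$ of $v$. Each such walk extends uniquely by the edge $(w,v)$, and conversely every walk of length $i+1$ from $u$ to $v$ arises this way from its penultimate vertex $w$. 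This mirrors the recursive definition of $\Paths{v}{}{i+1}$.

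Alternatively, we could avoid the entrywise statement and work with column sums directly. We would show $M^{i+1}[v]=\sum_{w\in N(v)} M^i[w]$ by swapping the order of summation, and then check $M^1[v]=|N(v)|=T_1(v)$. Then $M^i[\cdot]$ satisfies the same recursion as $T_i$, so the two agree by induction, and Proposition~\ref{prop:walkTn} finishes the proof.

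The only point needing care is the convention that $M[v,v']=1$ means an edge \emph{from} $v$ \emph{to} $v'$. With this convention, multiplying on the right by $M$ appends an edge at the end of a walk, which is why column sums rather than row sums count walks ending in $v$. Everything else is a routine interchange of finite sums.
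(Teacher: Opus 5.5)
Your proof is correct, and your ``alternative'' route is the paper's argument. The paper inducts on column sums directly. It writes $\sum_j (M^{i-1}M)[v_j,v]=\sum_{v'\in N(v)}\sum_j M^{i-1}[v_j,v']$ by swapping the finite sums, applies the induction hypothesis to get $\sum_{v'\in N(v)}|\Paths{v'}{}{i-1}|$, and closes with the recursive definition of walks (a disjoint union over the penultimate vertex). In effect it inlines the content of Proposition~\ref{prop:walkTn} instead of citing $T_i$.

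Your main route differs. You first prove the stronger, standard entrywise fact that $M^i[u,v]$ counts the walks of length $i$ from $u$ to $v$, and then sum over start vertices. This costs one extra disjointness observation (each walk has a unique first vertex), but it gives more than the statement needs: it identifies every entry of $M^i$, not just the column sums. The paper's column-sum induction is shorter because the start vertex is never tracked.

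Your opening sentence about reducing to Proposition~\ref{prop:walkTn} only fits the alternative route. The entrywise argument yields $M^i[v]=|\Paths{v}{}{i}|$ directly, without passing through $T_i$.
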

\begin{proof}
	For \( i=1 \), we have
	\begin{align*}
		\sum_{j=1}^{m}A[v_j,\vertexa] &= \sum_{\vertexb\in V, (\vertexb,\vertexa)\in E}1=|\Paths{\vertexa}{}{1}| 
	\end{align*}
	For \(i=n+1\) we have
	\begin{align*}
		\sum_{j=1}^{m}A^{i}[v_j,\vertexa]&=\sum_{j=1}^{m}(A^{i-1}\cdot A)[v_j,\vertexa]\\&=  \sum_{\vertexb\in V, (\vertexb,\vertexa)\in E}\sum_{j=1}^{m} A^{i-1}[v_j,\vertexb]\\&=\sum_{\vertexb\in V, (\vertexb,\vertexa)\in E}|\Paths{\vertexb}{}{i-1}|\\&=|\Paths{\vertexa}{}{i}|
		\qedhere
	\end{align*}    
\end{proof}

\section{$\mathbb{Q}$-Automata and their Equivalence}
\label{sec:semiringautomatEQ}

We provide background on semiring-automata and the complexity of computing whether two respective automata over the semiring of rational numbers are equivalent.

\subsection{$\mathbb{Q}$-Automata}
\label{sec:Qautomata}
A generalization of finite (nondeterministic) automata is its extension to arbitrary semirings and its connection to formal power series~\cite{KS_SalomaaSoittola1978,KS_BerstelReutenauer1988}.
In the following, we introduce automata over the semiring of rational numbers~\( \mathbb{Q} \).
For a finite alphabet $\Sigma$, a formal power series (over the semiring $\mathbb{Q}$ and \( \Sigma^* \)) is a function $ S : \Sigma^* \to \mathbb{Q}$.

Semiring-automata follow the same intuition as nondeterministic finite automata~\cite{KS_HopcroftMotwaniUllman2007,KS_RabinScott1959}, but assign weights from an underlying semiring to the initial states, to the transitions and to the final states.
The semantics is then that every run yields a weight, which is the product of the weights along the run.

\pagebreak[3]
\begin{definition}
A \( \mathbb{Q} \)-automaton (over the finite alphabet $\Sigma$) is a tuple \( \mathcal{A}=\tuple{Q,\Sigma,\delta,I,F} \) such that
    \begin{itemize}
        \item \( Q \) a finite set (the states),
        \item \( \Sigma \) a finite alphabet,
        \item \( \delta :  Q \times \Sigma \times Q \to \mathbb{Q} \) the transition function,
        \item \( I : Q \to \mathbb{Q} \) initial values, and
        \item \( F : Q \to \mathbb{Q} \) final values.
    \end{itemize}
\end{definition}
For all states \( q,p \in Q \), all \( n>1 \) and all \( a,a_1,\ldots,a_n \in \Sigma \) define the extended transition function $\delta^*: Q\times\Sigma^{*}\times Q \to \mathbb{Q}$:
\begin{align*}
    \delta^{*}(q,\varepsilon,p) & = \begin{cases}
        1 & \text{, if } p = q\\
        0 & \text{, if } p \neq q
    \end{cases}\\
    \delta^{*}(q,a,p) & = \delta(q,a,p) \\
    \delta^{*}(q,a_1\ldots a_n,p) & = \sum_{r \in Q} \delta(q,a_1\ldots a_{n-1},r) \cdot \delta(r,a_n,p)  
\end{align*}
For each state \( q\in Q \) and each word \( w\in \Sigma^* \) define:
\begin{align*}
\mathcal{A}(q,w) & = \sum_{p\in Q} I(p) \cdot \delta^{*}(p,w,q) \cdot F(q) 
\end{align*}
That is, \( \mathcal{A}(q,w) \) is the sum over all runs for the word \( w \) that end in \( q \), where weights are multiplied.
Each automaton $\mathcal{A}$ induces a formal power series $\llbracket\mathcal{A}\rrbracket$ which is defined by:
\begin{align*}
    &\llbracket\mathcal{A}\rrbracket : \Sigma^* \to \mathbb{Q} &&
    \llbracket\mathcal{A}\rrbracket (w) = \sum_{q\in Q} \mathcal{A}(q,w)
\end{align*}
Note that, when one replaces \( \mathbb{Q} \) by the semiring of Booleans \( \mathbb{B}=(\{0,1\},\land,\lor,0,1) \), one obtains nondeterministic finite automata. For instance, in \( \mathbb{B}\)-automata \( \mathcal{A}(q,w)=1 \) holds when there is an accepting run for the word \( w \) that starts in \( q \).

\subsection{Equivalence of $\mathbb{Q}$-Automata}
\label{sec:QautomataEQ}
We say that two \( \mathbb{Q} \)-automata are equivalent if they induce the same formal power series.
The following problem of equivalence has been considered in automata theory.

\begin{center}\begin{minipage}{\columnwidth}
\noindent{\problemAutomataEquivalence}\\
		\textbf{Input}: $\mathbb{Q}$-automata \( \mathcal{A}_{1} \) and \( \mathcal{A}_{2} \) over the same alphabet \( \Sigma \).\\
		\textbf{Question:} Does \( \llbracket\mathcal{A}_{1}\rrbracket = \llbracket\mathcal{A}_{2}\rrbracket \) hold?
	\end{minipage}
\end{center}

The general methodology for deciding the equivalence of semiring automata is attributed to \KSciteauthoryear{KS_Schuetzenberger1961}. Decidability for the semiring $\mathbb{Q}$ is given by \KSciteauthoryear{KS_EsikMaletti2010}.
The detailed analysis of the computational complexity of {\problemAutomataEquivalence} is given by \KSciteauthoryear{KS_Tzeng1996}. 
\begin{proposition}[\citeauthor{KS_Tzeng1996},\citeyear{KS_Tzeng1996}]
	\label{prop:complxityAutomataStuff}
	{\problemAutomataEquivalence} is in \textbf{NC}.
\end{proposition}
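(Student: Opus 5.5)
Since this proposition is imported from \KSciteauthoryear{KS_Tzeng1996}, I would reconstruct the argument in three steps: reduce equivalence to a zero test, bound the word length that matters, and then make the test a single matrix computation.

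\emph{Reduction to a zero test.} Write each automaton $\mathcal{A}_j$ as a linear representation $(\alpha_j,(M^{(j)}_a)_{a\in\Sigma},\beta_j)$, so that $\llbracket\mathcal{A}_j\rrbracket(w)=\alpha_j M^{(j)}_{w}\beta_j$. Here $\alpha_j$ is the row vector of initial values, $M^{(j)}_a[q,p]=\delta(q,a,p)$, $\beta_j$ is the column vector of final values, and $M_{a_1\cdots a_k}=M_{a_1}\cdots M_{a_k}$. Form the block-diagonal difference representation
\[
\alpha=(\alpha_1,-\alpha_2),\qquad M_a=\mathrm{diag}(M^{(1)}_a,M^{(2)}_a),\qquad \beta=\begin{pmatrix}\beta_1\\ \beta_2\end{pmatrix},
\]
of dimension $n=|Q_1|+|Q_2|$. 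Then $\llbracket\mathcal{A}_1\rrbracket=\llbracket\mathcal{A}_2\rrbracket$ if and only if $\alpha M_w\beta=0$ for all $w\in\Sigma^*$.

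\emph{Length bound.} Let $V_k$ be the span of $\{\alpha M_w : |w|\le k\}$. Then $V_{k+1}=V_k+\sum_a V_kM_a$, so once $V_k=V_{k+1}$ the chain is constant from then on. Since $\dim V_k\le n$ and the dimension grows strictly until the chain stabilises, $V_{n-1}$ already equals the whole forward space. Hence the series is zero if and only if $\alpha M_w\beta=0$ for all $|w|<n$. This gives a polynomial-time algorithm by maintaining a basis, but that algorithm is sequential.

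\emph{Squaring trick for \textbf{NC}.} To avoid enumerating exponentially many words, I would square the values and pass to Kronecker products. Let $N=\sum_{a\in\Sigma}M_a\otimes M_a$. By the mixed-product rule,
\[
(\alpha\otimes\alpha)\,N^k\,(\beta\otimes\beta)=\sum_{|w|=k}(\alpha M_w\beta)^2 .
\]
Since each summand is a nonnegative rational, the sum $\sum_{k=0}^{n-1}(\alpha\otimes\alpha)N^k(\beta\otimes\beta)$ vanishes if and only if $\alpha M_w\beta=0$ for every $|w|<n$. By the length bound, this holds if and only if the two automata are equivalent. The test therefore reduces to computing the powers $N^0,\dots,N^{n-1}$ of an $n^2\times n^2$ rational matrix, summing them, and evaluating one bilinear form. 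Iterated matrix products of polynomially many polynomial-size rational matrices can be computed in \textbf{NC}, by a balanced product tree or parallel prefix over exact rational arithmetic.

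\emph{Main obstacle.} I expect the hard step to be the bit-complexity bookkeeping. One must check that the entries of $N^k$ for $k<n$ have bit length polynomial in the input size, which holds since they are at most about $k$ times the input bit length plus $O(k\log n)$. One must also check that rational addition and multiplication of such numbers, with common denominators, stays within \textbf{NC}.
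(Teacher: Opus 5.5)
Your proof is correct. The paper gives no proof of this proposition; it imports the result from Tzeng. The only procedure it sketches, in Section~\ref{sec:QautomataEQ} following Kiefer, is the sequential one: grow a basis of the forward space of the difference automaton word by word, then check orthogonality with the final vector. That yields \textbf{PTIME} but not \textbf{NC}.

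Your first two steps coincide with that sketch. Your block-diagonal representation is exactly the paper's $\mathcal{A}_{-}$. Your stabilisation argument for the chain $V_0\subseteq V_1\subseteq\cdots$ supplies the justification the paper omits for its remark that every basis vector is reached by a word of length at most $|Q_1|+|Q_2|-1$.

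The ingredient absent from the paper is your third step. It replaces the exponentially many checks $\alpha M_w\beta=0$, $|w|<n$, by the single quantity $\sum_{k<n}(\alpha\otimes\alpha)N^k(\beta\otimes\beta)$. This needs only iterated matrix products, so it parallelises. It relies on $\mathbb{Q}$ being ordered, so that a sum of squares vanishes only if every term does.

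In exchange, the paper's sequential route produces an explicit distinguishing word (Proposition~\ref{prop:complxityAutomataStuffWitness}), from which it derives Theorem~\ref{thm:limitgraph}. Your step~2 already gives that length bound directly, with $n=2|V|$ for the pair $\mathcal{A}_G^{v},\mathcal{A}_G^{v'}$. Testing each summand $\sum_{|w|=k}(\alpha M_w\beta)^2$ separately also identifies, in \textbf{NC}, the minimal length of a distinguishing word.
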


We will also make use of the following result by \KSciteauthoryear{KS_Kiefer2020} that shows when solving {\problemAutomataEquivalence} in \textbf{PTIME} (instead of \textbf{NC}), one obtains a witness whenever the input is non-equivalent.
\begin{proposition}[{\citeauthor{KS_Kiefer2020}, \citeyear{KS_Kiefer2020}}]
    \label{prop:complxityAutomataStuffWitness}
    There is an algorithm that decides \textsc{$\mathbb{Q}$-Equivalence} in polynomial time for two given automata \( \mathcal{A}_{1}=\tuple{Q_{1},\Sigma,\delta_{1},I_{1},F_{1}} \) and \( \mathcal{A}_{2}=\tuple{Q_{2},\Sigma,\delta_{2},I_{2},F_{2}} \), and if \( \mathcal{A}_{1} \) and \( \mathcal{A}_{2} \) are not equivalent, then a word $w \in \Sigma ^*$ with $\llbracket\mathcal{A}_{1}\rrbracket(w) \neq \llbracket\mathcal{A}_{2}\rrbracket(w) $ of maximal size $|Q_1|{+}|Q_2|{-}1$ is provided.
\end{proposition}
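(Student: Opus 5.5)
The plan is the standard linear-algebra (Schützenberger/Tzeng) argument: reduce equivalence to a zero test for a single ``difference'' automaton, compute a basis of its forward reachable space by a breadth-first search over words, and read a short witness off that basis. Let $n_1=|Q_1|$, $n_2=|Q_2|$ and $n=n_1+n_2$.

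\textbf{Difference automaton.} For each $a\in\Sigma$, let $M^{(k)}_a$ be the $n_k\times n_k$ matrix with entries $M^{(k)}_a[q,p]=\delta_k(q,a,p)$. Let $M_a$ be the block-diagonal matrix $\mathrm{diag}(M^{(1)}_a,M^{(2)}_a)$. Put $\alpha=(I_1,-I_2)$ as a row vector and $\eta=(F_1,F_2)^{T}$ as a column vector. For $w=a_1\cdots a_m$ write $M_w=M_{a_1}\cdots M_{a_m}$, with $M_\varepsilon$ the identity. Unfolding the definition of $\delta^*$ gives $\llbracket\mathcal{A}_k\rrbracket(w)=I_k M^{(k)}_w F_k$, and hence
\[
\llbracket\mathcal{A}_1\rrbracket(w)-\llbracket\mathcal{A}_2\rrbracket(w)=\alpha M_w\eta .
\]
So the automata are equivalent iff $\alpha M_w\eta=0$ for all $w\in\Sigma^*$. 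Equivalently, $\eta$ annihilates the forward space $\mathcal{F}=\mathrm{span}\{\alpha M_w\mid w\in\Sigma^*\}\subseteq\mathbb{Q}^n$.

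\textbf{Computing a basis of $\mathcal{F}$.} Run a breadth-first search over words, starting from the queue $[\varepsilon]$ and the basis $B=\emptyset$. When a word $w$ is dequeued, test by Gaussian elimination whether $\alpha M_w$ lies in $\mathrm{span}(B)$. If it does not, add $\alpha M_w$ to $B$ and enqueue $wa$ for every $a\in\Sigma$; otherwise discard $w$.

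Since $\dim\mathcal{F}\le n$, at most $n$ vectors are ever added, so at most $1+n|\Sigma|$ words are processed. The words retained in $B$ form a prefix-closed tree with at most $n$ nodes, so each has length at most $n-1$.

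Correctness: I will show that $\mathrm{span}(B)$ is closed under right multiplication by each $M_a$. Each $b=\alpha M_w\in B$ has had all its children $\alpha M_{wa}$ examined, and each child was either added to $B$ or found to lie in $\mathrm{span}(B)$. The span also contains $\alpha=\alpha M_\varepsilon$. By induction on $|w|$ it then contains every $\alpha M_w$, so $\mathrm{span}(B)=\mathcal{F}$.

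\textbf{Deciding and extracting a witness.} Finally check $b\eta=0$ for every $b\in B$. If all these values vanish, then $\eta$ annihilates $\mathcal{F}$ and the automata are equivalent. Otherwise some $b=\alpha M_w\in B$ has $b\eta\neq0$, and this $w$ is a witness with $|w|\le n-1=|Q_1|+|Q_2|-1$, as the statement requires.

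\textbf{Main obstacle: polynomial running time.} The number of arithmetic operations is clearly polynomial. The issue is bit size: I must rule out blow-up of the rationals. Every vector handled is $\alpha M_{wa}$ with $|wa|\le n$. Its entries are therefore sums of at most $n^{n}$ products of at most $n+1$ input rationals. Over a common denominator, such entries have bit length polynomial in the input size.

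I would compute these vectors exactly from scratch, or incrementally as $(\alpha M_w)M_a$, which gives the same exact values. Membership tests are done by Gaussian elimination, which runs in polynomial time over $\mathbb{Q}$ by Edmonds' bound on intermediate bit sizes. This keeps the whole procedure polynomial.
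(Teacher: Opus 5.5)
Your proposal is correct and follows essentially the same route as the paper's outline of Kiefer's algorithm: the block-diagonal difference automaton with initial vector $(I_1,-I_2)$, a breadth-first computation of a basis of the forward space, the orthogonality test against $\eta$, and a witness read off the prefix-closed tree of basis words, which has depth at most $|Q_1|+|Q_2|-1$. You also fill in two points the paper only asserts: the closure argument showing that the computed span is the whole forward space, and the bit-size bound on the vectors $\alpha M_w$ with $|w|\le n$, which keeps the exact rational arithmetic polynomial.
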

The proof of the above proposition is constructive and provides an algorithm. We will outline this algorithm now, but for a more deliberated and detailed presentation, we refer to \KSciteauthoryear{KS_Kiefer2020}.
For the algorithm, one computes the \emph{linerar representation} $\tuple{\alpha,M,\eta}$ of an $\mathbb{Q}$-automaton \( \mathcal{A}=\tuple{Q,\Sigma,\delta,I,F} \) with $Q=\{ q_0, \ldots, q_{n-1}\} $, which is that
\begin{itemize}
    \item $\alpha \in \mathbb{Q}^{1\times n}$ is the initial vector (a row-vector over $\mathbb{Q}$), such that $\alpha[0,i]=I(q_i)$ for all $i\in\{0,\ldots,n\}$,
    \item $ M: \Sigma \to \mathbb{Q}^{n\times n} $ assigns to every sign a quadratic matrix (the transition matrix), such that $M(a)[i,j]=\delta(q_i,a,q_j)$ for all $i\in\{0,\ldots,n\}$, and
    \item $\eta \in \mathbb{Q}^{n\times 1}$  is the final vector, a column-vector over $\mathbb{Q}$ such that $\alpha[i,0]=F(q_i)$ for all $i\in\{0,\ldots,n\}$.
\end{itemize}
One can obtain the power series $\llbracket\mathcal{A}\rrbracket$ by means of the matrix multiplication, i.\,e., for all words $w=a_0\ldots a_m \in \Sigma^*$ it holds 
\begin{align*}
    \llbracket\mathcal{A}\rrbracket (w) =\alpha \cdot M(w) \cdot \eta
\end{align*}
whereby $ M(w) = M(a_0)\cdot \ldots \cdot M(a_m) $.
Because $\mathbb{Q}$ is also a field, $\mathbb{Q}^n$ is also a vector space over $\mathbb{Q}$ of dimension $\dim(\mathbb{Q}^n)=n$. For a set of vectors $S \subseteq \mathbb{Q}^n$, we denote by $\mathrm{span}(S) $ the smallest subset of $\mathbb{Q}^n$ that contains $S$ and is itself a vector space (with the same scalars and operators from $\mathbb{Q}^n$). 
The \emph{forward-space} $\forwardspace$ of a $\mathbb{Q}$-automaton \( \mathcal{A} \) is defined by
\begin{equation*}
    \forwardspace = \mathrm{span}(\{ \alpha \cdot M(w) \mid w \in \Sigma^* \}) \ .
\end{equation*}
Because $\forwardspace$ is a subspace of \( \mathbb{Q}^n \), the dimension of the vector space $\forwardspace$ is bound by  $ \dim(\forwardspace) \leq n $.
Hence, $\forwardspace$ has a base \( \baseofA \) consisting of $\dim(\forwardspace) $ many vectors.
Notably, for each base $b \in \baseofA$, there is a word $w_b$ of length $|w_b| \leq \dim(\forwardspace)-1$ such that $b=\alpha \cdot M(w_b)$.
One computes a base $\baseofA$ of $\forwardspace$ by initially set \( \baseofA = \{ \alpha \} \) and \( W = \Sigma \). In each iteration dequeue $w_b\in W$ and if \( b =\alpha \cdot M(w_b) \) is linear independent to \( \baseofA \), add \( b \) to \( \baseofA \) and set $ W = W \cup \{ w_ba \mid a \in \Sigma \} $. All operations are feasible in polynomial time; linear independence can be tested by computing the reduced row echelon form~\cite{KS_Edmonds1967}.
Given $\baseofA$, we decide in polynomial time the emptiness problem for $\mathcal{A}$, i.\,e., decide if $\llbracket\mathcal{A}\rrbracket(w)=0$ holds for all $w\in\Sigma^*$.
This can be done by checking for each vector $b\in\baseofA$ if $b\cdot\eta =0$ holds, i.\,e., all bases are orthogonal to the final vector $\eta$.
Because matrix multiplication can be computed in polynomial time and because $\baseofA$ has at most $n$ elements, checking for emptiness is also in polynomial time.

For computing {\problemAutomataEquivalence} in \textbf{PTIME}, one employs the method of checking emptiness from above. First, construct (in polynomial time) from the input $\mathcal{A}_1$ and $\mathcal{A}_2$ an automaton $\mathcal{A}_{\text{\textminus}}$ which runs $\mathcal{A}_1$ and $\mathcal{A}_2$ in parallel and negates the initial values of $\mathcal{A}_2$.
When \( \tuple{\alpha_i,M_i,\eta_i} \) is the linear representation of $\mathcal{A}_i$, the linear representation \( \tuple{\alpha_{\text{\textminus}},M_{\text{\textminus}},\eta_{\text{\textminus}}} \) of \( \mathcal{A}_{\text{\textminus}} \) is
\begin{align*}
    \alpha_{\text{\textminus}} & = \begin{bNiceArray}{cc} \alpha_1 & -\alpha_2 \end{bNiceArray} &
    M_{\text{\textminus}}(a) & = \begin{bNiceArray}{cc}%
        M_1(a) & 0 \\
         0 & M_2(a)
    \end{bNiceArray} &
    \eta_{\text{\textminus}} & = \begin{bNiceArray}{c}
        \eta_1 \\
        \eta_2
    \end{bNiceArray}
\end{align*}
where \( \alpha_{\text{\textminus}} \in \mathbb{Q}^{1\times(|Q_1|+|Q_2|)} \), \( \eta_{\text{\textminus}} \in \mathbb{Q}^{(|Q_1|+|Q_2|)\times 1} \) and \( M_{\text{\textminus}}(a) \in \mathbb{Q}^{(|Q_1|+|Q_2|)\times(|Q_1|+|Q_2|)} \) for each \( a\in\Sigma \).
 The automaton $\mathcal{A}_{\text{\textminus}}$ has $|Q_1|+|Q_2|$ states. By construction, we have $\llbracket\mathcal{A}_{1}\rrbracket = \llbracket\mathcal{A}_{2}\rrbracket$ exactly when \( \mathcal{A}_{\text{\textminus}} \) is empty, i.e., $\llbracket\mathcal{A}_{\text{\textminus}}\rrbracket(w)=0$ holds for all $w\in\Sigma^*$.
We check emptiness of \( \mathcal{A}_{\text{\textminus}} \) as described above, i.e., by computing the base $\mathcal{B}_{{\text{\textminus}}}$ of the forward-space \( \mathcal{F}_{\!\!\mathcal{A}_{\text{\textminus}}} \) and check then orthogonality of $\mathcal{B}_{{\text{\textminus}}}$ with \( \eta_{\text{\textminus}} \). When $\mathcal{A}_{\text{\textminus}}$ is non-empty, above procedure guarantees a word $w_b \in \Sigma^{*}$ of length $|w_b| \leq |Q_1|+|Q_2|-1$ that witnesses $\llbracket\mathcal{A}_{1}\rrbracket \neq \llbracket\mathcal{A}_{2}\rrbracket$.

Note that the \textbf{NC} complexity of {\problemAutomataEquivalence} is a rather surprising phenomenon, as similar problems are usually much harder, e.\,g., \textbf{PSPACE}~\cite{KS_StockmeyerMeyer1973,KS_Tzeng1996} or even undecidable~\cite{KS_Sakarovitch2009}.

\section{Complexity of {\normalfont\problemGraphPathLength}}
\label{sec:Graph2Semiring}
In this section, we will show how the problem of deciding whether two vertices \( \vertexa,\vertexb \) \emph{agree on the numbers of walks}, i.e., if for each length the number of walks of that length that end in each vertex is the same (\problemGraphPathLength), can be reduced to the equivalence of semiring-automata over \( \mathbb{Q} \) ({\problemAutomataEquivalence}), yielding a \textbf{PTIME} complexity.
 
The central insight of this section is that one can construct for a vertex \( \vertexa \in V \) of a graph \( G=(V,E) \) an \( \mathbb{Q} \)-automaton \( \mathcal{A}_{G}^\vertexa \) over a unary alphabet, e.g., \( \Sigma=\{s\} \), such that the formal series \( 
\llbracket\mathcal{A}_{G}^\vertexa\rrbracket \) yields the numbers of walks of length \( i \) that end in \( \vertexa \) for the word \( s^i \).
Formally, for a graph \( G=(V,E) \) and a vertex \( \vertexa \in V \), we define the following $\mathbb{Q}$-automaton:
\begin{align*}
    \mathcal{A}_{G}^\vertexa & = (V,\Sigma,\delta,I,F) \\
    \Sigma & = \{ s \} &  
    I(q) & = 1 \\
    \delta(q,a,p) & = \begin{cases}
        1 & \text{, if } (q,p) \in E \\
        0 & \text{, if } (q,p) \notin E
    \end{cases} &
    F(q) & = \begin{cases}
        1 & q = \vertexa \\
        0 & q \neq \vertexa
    \end{cases} 
\end{align*}
The states of the automaton \( \mathcal{A}_{G}^{\vertexa} \) are the vertices $V$, the alphabet \( \Sigma= \{ s \} \) is unary, and $v$ is the only state to which we assign a non-zero weight. The transition function is (quintessentially) the edge relation of the graph \( G \), i.\,e., the transition from \( \vertexa \) to \( \vertexb  \) when reading \( a \) has weight \( 1 \) when there is an edge from \( \vertexa \) to \( \vertexb \) in \( G\); otherwise the weight is \( 0 \).

\pagebreak[3]
\begin{lemma}
\label{lem:eqgraphautomata}
    For each Graph \( G=(V,E) \), each vertex \( v\in V \) and each \( i\in\mathbb{N} \) holds \( T_i(\vertexa)=\mathcal{A}_{G}^{\vertexa}(\vertexa,s^i) \).
\end{lemma}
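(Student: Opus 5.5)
I unfold the definition of \( \mathcal{A}_{G}^{\vertexa}(\vertexa,s^i) \) and compare it with the recursion for \( T_i \). Since \( F(\vertexa)=1 \) and \( I(p)=1 \) for every \( p \), we have
\begin{equation*}
  \mathcal{A}_{G}^{\vertexa}(\vertexa,s^i) = \sum_{p\in V} \delta^{*}(p,s^i,\vertexa).
\end{equation*}
It therefore suffices to prove a slightly stronger auxiliary claim, by induction on \( i \). The claim is that for every vertex \( u\in V \) (not only \( \vertexa \)) and every \( i\in\mathbb{N} \),
\begin{equation*}
  \sum_{p\in V}\delta^{*}(p,s^i,u)=T_i(u).
\end{equation*}
We need the claim for all \( u \) because the recursion for \( T_{i+1}(\vertexa) \) refers to \( T_i \) at the in-neighbours of \( \vertexa \). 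The lemma is then the instance \( u=\vertexa \).

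For the base case \( i=1 \), we have \( \delta^{*}(p,s,u)=\delta(p,s,u) \), which is \( 1 \) exactly when \( (p,u)\in E \). Summing over \( p \) gives \( |N(u)|=T_1(u) \).

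For the step, I use the recursive clause of \( \delta^{*} \):
\begin{equation*}
  \delta^{*}(p,s^{i+1},u)=\sum_{r\in V}\delta^{*}(p,s^{i},r)\cdot\delta(r,s,u).
\end{equation*}
Summing over \( p \) and swapping the two finite sums gives
\begin{equation*}
  \sum_{r\in V}\delta(r,s,u)\sum_{p\in V}\delta^{*}(p,s^i,r).
\end{equation*}
The factor \( \delta(r,s,u) \) is \( 1 \) exactly for \( r\in N(u) \) and \( 0 \) otherwise, so the expression equals \( \sum_{r\in N(u)}\sum_{p}\delta^{*}(p,s^i,r) \). By the induction hypothesis this is \( \sum_{r\in N(u)}T_i(r)=T_{i+1}(u) \).

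I expect no real obstacle here; the argument is pure bookkeeping. The only care needed concerns the paper's notation. First, the recursive clause of \( \delta^{*} \) is written with \( \delta \) where \( \delta^{*} \) is meant on the prefix, and I read it that way. Second, the transition function is written with a generic letter \( a \), which I read as \( s \), since \( s \) is the only letter of the unary alphabet. Third, the indexing convention \( n>1 \) in that clause must cover the step from \( s^i \) to \( s^{i+1} \) for every \( i\geq 1 \), which it does.
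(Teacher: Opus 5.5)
Your proof is correct and follows essentially the same route as the paper: strengthen to the identity $\sum_{p\in V}\delta^{*}(p,s^i,q)=T_i(q)$ for every vertex $q$, prove it by induction on $i$ using the recursive clause of $\delta^{*}$ and an exchange of the two finite sums, and then specialise to $q=v$ using $I\equiv 1$ and $F(v)=1$. Your readings of the notational slips ($\delta$ written for $\delta^{*}$ on the prefix, and the generic letter $a$ standing for $s$) match how the paper's own computation uses them.
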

\begin{proof}
	We start by showing for each $q \in V$ and $i\in\mathbb{N}$ holds:
	\begin{equation*}
		T_i(q)= \sum_{p\in V} \delta^{*}(p,s^i,q)
	\end{equation*}
		The proof is by induction.
		For these case of \( i=1 \), we have
		\begin{align*}
			T_1(q) & = |N(q)| = |\{ p\in V \mid (p,q) \in E \}\\
			& = \sum_{p\in V} \left(\begin{cases}
				1 & \text{, if } (p,q) \in E \\
				0 & \text{, if } (p,q) \notin E
			\end{cases}\right) \\
			& = \sum_{p\in V} \delta(p,s,q) = \sum_{p\in V} \delta^{*}(p,s,q)
		\end{align*}
		For the case of \( i>1 \), we have
		\begin{align*}
			& \sum_{p\in V} \delta^{*}(p,s^i,q) \\
			& = \sum_{p\in V} \sum_{r \in V} \delta^*(p,s^{i-1},r) \cdot \delta(r,a,q)\\
			& = \sum_{p\in V} \sum_{r \in V} \left( \begin{cases}
				\delta^*(p,s^{i-1},r) &, (r,q) \in E\\
				0 &, (r,q) \notin E
			\end{cases} \right)\\
			& = \sum_{p\in V} \sum_{r \in N(q)} 
			\delta^*(p,s^{i-1},r) \\
			& = \sum_{r \in N(q)}\sum_{p\in V}  
			\delta^*(p,s^{i-1},r) \\
			& = \sum_{r \in N(q)} T_{i-1}(r) \tag{induction hypothesis} \\
			& = T_{i}(q) 
		\end{align*}
	Finally, we show \( T_i(\vertexa)=\mathcal{A}_{G}^{\vertexa}(\vertexa,s^i) \):
	\begin{align*}
		\mathcal{A}_{G}^{\vertexa}(\vertexa,s^i) & = \sum_{p \in V} I(p) \cdot  \delta^*(p,s^i,\vertexa) \cdot F(\vertexa)\\
		& = \sum_{p \in V} 1 \cdot  \delta^*(p,s^i,\vertexa) \cdot 1 = \sum_{p \in V} \delta^*(p,s^i,\vertexa) \\
		& = \sum_{p \in V} \delta^*(p,s^i,\vertexa) = T_i(\vertexa)
	\end{align*}
	This completes the proof of Lemma~\ref{lem:eqgraphautomata}.
\end{proof}

We obtain the following theorem by combining Proposition~\ref{prop:complxityAutomataStuff} and Lemma~\ref{lem:eqgraphautomata}. 
\begin{theorem}
    \label{thm:maingraph}
	The following holds:
	\begin{enumerate}[(a)]
		\item 
        {\problemGraphPathLength} \( \leq_p \) {\problemAutomataEquivalence}
		\item 
		{\problemGraphPathLength} is in \textbf{PTIME}.
	\end{enumerate}
\end{theorem}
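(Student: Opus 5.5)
For part (a), I would use the reduction
\[
  (G,\vertexa,\vertexb)\mapsto(\mathcal{A}_{G}^{\vertexa},\mathcal{A}_{G}^{\vertexb}).
\]
Both automata share the state set \(V\), the unary alphabet \(\Sigma=\{s\}\), the all-ones initial function and the edge-indicator transition function. They differ only in the final function, which is the indicator of \(\vertexa\) or of \(\vertexb\), respectively. Writing these down takes \(O(|V|^2)\) time, so the map is polynomial.

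For correctness, I first note that \(F(q)=0\) for \(q\neq\vertexa\). Hence \(\llbracket\mathcal{A}_{G}^{\vertexa}\rrbracket(w)=\sum_q\mathcal{A}_{G}^{\vertexa}(q,w)=\mathcal{A}_{G}^{\vertexa}(\vertexa,w)\). Combining Lemma~\ref{lem:eqgraphautomata} with Proposition~\ref{prop:walkTn} then gives
\[
  \llbracket\mathcal{A}_{G}^{\vertexa}\rrbracket(s^i)=T_i(\vertexa)=|\Paths{\vertexa}{}{i}| \qquad\text{for all } i\in\mathbb{N}.
\]
The analogous identity holds for \(\vertexb\).

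The one point needing care is the empty word \(s^0=\varepsilon\), because agreement on walks only ranges over \(i\ge 1\). Since \(\delta^*(p,\varepsilon,q)\) is \(1\) if \(p=q\) and \(0\) otherwise, \(\llbracket\mathcal{A}_{G}^{\vertexa}\rrbracket(\varepsilon)=I(\vertexa)F(\vertexa)=1\). By the same computation the value for \(\vertexb\) is also \(1\). So the two series always agree on \(\varepsilon\). Since every word over \(\Sigma\) has the form \(s^i\), we get \(\llbracket\mathcal{A}_{G}^{\vertexa}\rrbracket=\llbracket\mathcal{A}_{G}^{\vertexb}\rrbracket\) if and only if \(|\Paths{\vertexa}{}{i}|=|\Paths{\vertexb}{}{i}|\) for all \(i\in\mathbb{N}\). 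This is exactly the statement that \(\vertexa\) and \(\vertexb\) agree on the numbers of walks, which proves (a).

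For part (b), I would compose the reduction with a polynomial-time decision procedure for {\problemAutomataEquivalence}. Proposition~\ref{prop:complxityAutomataStuff} places that problem in \textbf{NC}\(\subseteq\)\textbf{PTIME}. Proposition~\ref{prop:complxityAutomataStuffWitness} gives the polynomial-time algorithm directly. Because \textbf{PTIME} is closed under polynomial-time reductions, {\problemGraphPathLength} is in \textbf{PTIME}.

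As a by-product, the witness bound in Proposition~\ref{prop:complxityAutomataStuffWitness} shows that it suffices to compare walk counts for lengths \(i\le 2|V|-1\). Each such count is computable as \(M^i[\vertexa]\) using Proposition~\ref{prop:walks_matrixpotency}, with numbers of polynomially many bits. I expect the only subtle step to be the \(\varepsilon\) case and the collapse of the sum over states to the single final state; everything else is bookkeeping.
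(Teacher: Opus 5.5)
Your proposal is correct and follows essentially the same route as the paper. Both use the reduction \((G,v,v')\mapsto(\mathcal{A}_G^{v},\mathcal{A}_G^{v'})\), with correctness coming from Lemma~\ref{lem:eqgraphautomata}, the fact that \(v\) is the only state with non-zero final weight, and the fact that every word over the unary alphabet is some \(s^i\); part (b) then follows by composing with the polynomial-time (indeed \textbf{NC}) equivalence test. Your explicit check that both series take value \(1\) on \(\varepsilon\) is a useful addition, since the paper's lemma only covers \(i\in\mathbb{N}\) and its proof passes over the empty word without comment.
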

\begin{proof}
	Clearly, (b) follows from (a) and Proposition~\ref{prop:complxityAutomataStuff}.
	For (b), we consider 
    the construction of \( \mathcal{A}_{G}^v \). One constructs \( \mathcal{A}_{G}^v \) easily in polynomial time in the size of the graph \( G=(V,E) \).
	Lemma~\ref{lem:eqgraphautomata} guarantees the following property:
	\begin{multline*}
	\Big( T_i(\vertexa) = T_i(\vertexb) \text{ for all \( i \,{\in}\, \mathbb{N} \)} \Big) \\
    \Leftrightarrow \Big( \mathcal{A}_{G}^{v_{\!1}}(\vertexa,s^i) =  \mathcal{A}_{G}^{v_{\!2}}(\vertexb,s^i) \text{ for all \( i\in \mathbb{N} \)} \Big)
	\end{multline*}
Now note that the following property holds:
	\begin{equation*} 
        \Big( \mathcal{A}_{G}^{\vertexa}(\vertexa,s^i) \,{=}\,  \mathcal{A}_{G}^{\vertexb}(\vertexb,s^i) \text{ for all \( i \,{\in}\, \mathbb{N} \)} \Big)
    \Leftrightarrow
   \llbracket \mathcal{A}_{G}^{\vertexa} \rrbracket \,{=}\,
   \llbracket \mathcal{A}_{G}^{\vertexb} \rrbracket
	\end{equation*}
	This is because \( u \) is in \( \mathcal{A}_{G}^{u} \) the only state non-zero final value for \( u \in \{\vertexa,\vertexb\} \).
    Moreover, because \( \mathcal{A}_{G}^{\vertexa} \) and \( \mathcal{A}_{G}^{\vertexb} \) operate on the unary alphabet \( \Sigma=\{s\} \), for each $i\in\mathbb{N}_0$ the word \( s^i \) is the one and only word in \( \Sigma^i \).
    Due to the above insights,
    \begin{equation*}
    (G,\vertexa,\vertexb)	\mapsto (\mathcal{A}_{G}^{\vertexa},\mathcal{A}_{G}^{\vertexb})
    \end{equation*}
    is a polynomial-time many-one reduction from  {\problemGraphPathLength} to {\problemAutomataEquivalence}.
\end{proof}

By combining Theorem~\ref{thm:maingraph}, respectively Lemma~\ref{lem:eqgraphautomata}, with Proposition~\ref{prop:complxityAutomataStuffWitness}, we obtain the following central insight.
\begin{theorem}
    \label{thm:limitgraph}
    For every graph \( G=(V,E) \) and all vertices \( \vertexa,\vertexb \in V \) that do not agree on the number of walks, it holds
    \begin{equation*}
        \min\{\  i \in \mathbb{N} \mid \  |\Paths{\vertexa}{}{i}| \neq |\Paths{\vertexb}{}{i}| \ \} \leq 2 |V| - 1 \ .
    \end{equation*}
\end{theorem}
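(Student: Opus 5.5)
The plan is to apply Proposition~\ref{prop:complxityAutomataStuffWitness} to the two automata $\mathcal{A}_{G}^{\vertexa}$ and $\mathcal{A}_{G}^{\vertexb}$ from the reduction in Theorem~\ref{thm:maingraph}.

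First, I combine Proposition~\ref{prop:walkTn} with Lemma~\ref{lem:eqgraphautomata}. Since $\vertexa$ is the only state with non-zero final weight in $\mathcal{A}_{G}^{\vertexa}$, this gives, for every $i\in\mathbb{N}$,
\begin{equation*}
  |\Paths{\vertexa}{}{i}| = T_i(\vertexa) = \mathcal{A}_{G}^{\vertexa}(\vertexa,s^i) = \llbracket\mathcal{A}_{G}^{\vertexa}\rrbracket(s^i).
\end{equation*}
The same identity holds for $\vertexb$. By assumption $\vertexa$ and $\vertexb$ do not agree on the numbers of walks, so the two power series differ.

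Next, I invoke Proposition~\ref{prop:complxityAutomataStuffWitness}. Both automata have state set $V$, so it yields a witness word $w$ with $\llbracket\mathcal{A}_{G}^{\vertexa}\rrbracket(w)\neq\llbracket\mathcal{A}_{G}^{\vertexb}\rrbracket(w)$ and $|w|\le |V|+|V|-1 = 2|V|-1$. The alphabet is unary, so $w=s^i$ with $i\le 2|V|-1$. Then $|\Paths{\vertexa}{}{i}|\neq|\Paths{\vertexb}{}{i}|$, which bounds the minimum.

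The main obstacle is the case $i=0$, since the minimum in the statement ranges over $\mathbb{N}$, which excludes $0$. I have to rule out that the witness is the empty word, and for that I need the value of each series on $\varepsilon$. Here $\delta^*(p,\varepsilon,q)$ is $1$ exactly when $p=q$, and the initial weights are all $1$. Hence $\llbracket\mathcal{A}_{G}^{u}\rrbracket(\varepsilon)=I(u)F(u)=1$ for $u\in\{\vertexa,\vertexb\}$. The two series therefore agree on $\varepsilon$, so the witness has length $i\ge 1$, as needed.

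If I instead want to avoid relying on the exact form of the witness in Proposition~\ref{prop:complxityAutomataStuffWitness}, I would argue directly. Take the forward space of the difference automaton $\mathcal{A}_{\text{\textminus}}$, which has dimension at most $2|V|$. The Krylov sequence $\alpha_{\text{\textminus}} M^k$ stabilizes its span after at most $2|V|-1$ steps. So if $\alpha_{\text{\textminus}} M^k\eta_{\text{\textminus}}=0$ for all $k\le 2|V|-1$, then it vanishes for all $k$.
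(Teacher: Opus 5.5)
Your proof is correct and takes the paper's route: the paper gets Theorem~\ref{thm:limitgraph} by combining Lemma~\ref{lem:eqgraphautomata} with the witness bound of Proposition~\ref{prop:complxityAutomataStuffWitness}, applied to $\mathcal{A}_{G}^{\vertexa}$ and $\mathcal{A}_{G}^{\vertexb}$, each with $|V|$ states. Your check that both series equal $1$ on $\varepsilon$, so the witness cannot be the empty word, supplies a detail the paper leaves implicit, and your fallback Krylov-space argument is the mechanism behind that proposition as sketched in Section~\ref{sec:QautomataEQ}.
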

Intuitively, Theorem~\ref{thm:limitgraph} states that the size of the smallest witness for non-agreement on the number of walks is bound linear in the amount of vertices.
\section{Complexity of Discussion-Based Semantics}
\label{sec:ComplexityDBS}
In this section, we settle the complexity of the discussion-based semantics.
Specifically, we show that {\normalfont{\problemDisEquivalence}} and {\normalfont{\problemDisStronger}} are decidable in polynomial time.
The preceded sections provide the following sequence of reductions:
\begin{equation*}
    \problemDisEquivalence \leq_{p} \problemGraphPathLength \leq_{p} \problemAutomataEquivalence
\end{equation*}
By employing Proposition~\ref{prop:complxityAutomataStuff}, we obtain from Theorem~\ref{thm:DisEqToGraphWalks} and Theorem~\ref{thm:maingraph} decidability within polynomial time.
\begin{theorem}
    {\problemDisEquivalence} is in \textbf{PTIME}.
\end{theorem}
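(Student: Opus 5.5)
The plan is to chain the two polynomial-time reductions established earlier and then appeal to the known complexity of \problemAutomataEquivalence. Given an instance $(F,a,b)$ of \problemDisEquivalence\ with $F=(A,R)$, I first apply the reduction of Theorem~\ref{thm:DisEqToGraphWalks}. This is the identity map, so it yields the instance $(F,a,b)$ of \problemGraphPathLength, where $F$ is read as a directed graph $G=(A,R)$. The correctness of this step rests on the observation already used there. For every $i$, $\Dis^F_i(a)$ and $\Dis^F_i(b)$ carry the same sign, namely $(-1)^i$. Hence $\Dis^F_i(a)=\Dis^F_i(b)$ holds iff $|\Paths{a}{}{i}|=|\Paths{b}{}{i}|$. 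Lexicographic equality in both directions, i.e., $a\simeq^\text{dbs}_F b$, amounts to equality in every component, because if some first component differs, one of $\geq^\text{lex}$ or $\leq^\text{lex}$ fails.

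Next, I apply Theorem~\ref{thm:maingraph}(a) and map $(G,a,b)$ to the pair of unary $\mathbb{Q}$-automata $(\mathcal{A}_G^a,\mathcal{A}_G^b)$. Each automaton has $|A|$ states, and its transition weights are exactly the adjacency matrix. Building it therefore takes time polynomial in $|A|+|R|$, and all numeric entries lie in $\{0,1\}$. Composing polynomial-time many-one reductions gives again a polynomial-time reduction, so $\problemDisEquivalence\leq_p\problemAutomataEquivalence$.

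Finally, by Proposition~\ref{prop:complxityAutomataStuff}, \problemAutomataEquivalence\ lies in \textbf{NC}$\subseteq$\textbf{PTIME}. Alternatively, one can invoke the explicit polynomial-time algorithm of Proposition~\ref{prop:complxityAutomataStuffWitness}. Since \textbf{PTIME} is closed under polynomial-time reductions, \problemDisEquivalence\ is in \textbf{PTIME}.

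The only delicate point I anticipate is the word $s^0=\varepsilon$. The power series also evaluates the empty word, whereas \problemGraphPathLength\ only constrains $i\in\mathbb{N}$. On $\varepsilon$ both automata yield $\sum_p I(p)\,\delta^*(p,\varepsilon,u)\,F(u)=1$ for $u\in\{a,b\}$, so the automata agree there automatically. Their equivalence is therefore exactly agreement for all $i\geq 1$, and the reduction is sound in both directions.

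As a sanity check, a more elementary algorithm follows from Theorem~\ref{thm:limitgraph}. It suffices to compare $M^i[a]$ and $M^i[b]$ for $i\leq 2|A|-1$, using Proposition~\ref{prop:walks_matrixpotency}. These entries have polynomially many bits, at most $i\log|A|$, so this comparison also runs in polynomial time.
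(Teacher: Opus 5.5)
Your proposal is correct and follows the paper's own argument: you chain the identity reduction of Theorem~\ref{thm:DisEqToGraphWalks} with the automaton reduction of Theorem~\ref{thm:maingraph}, then conclude via Proposition~\ref{prop:complxityAutomataStuff} that \problemAutomataEquivalence\ lies in \textbf{NC} $\subseteq$ \textbf{PTIME}. Your explicit check that both automata output $1$ on $\varepsilon$ (so that equivalence of the power series coincides with agreement for all $i\geq 1$) also supplies a detail the paper passes over silently.
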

In the following example, we carry out the line of reductions and show how the final result is computed.
\begin{example}
\label{ex:fullexample}
Consider the \acr{AAF}
	\begin{equation*}
		F_{\text{joint}} = (A_{\ref{afex:double_loop}} \cup  A_{\ref{afex:nonismophic}},R_{\ref{afex:double_loop}} \cup R_{\ref{afex:nonismophic}})
	\end{equation*}
	 from Example~\ref{ex:nonismophic}, which is the union of $F_{\ref{afex:double_loop}} = (A_{\ref{afex:double_loop}},R_{\ref{afex:double_loop}})$ from Example \ref{ex:double_loop} and $F_{\theafexample} = (A_{\ref{afex:nonismophic}},R_{\ref{afex:nonismophic}})$ from Example \ref{ex:nonismophic}.
	We show that $a\, \simeq^\text{dbs}_F\, h$ holds by carrying out the reduction of the problem to {\problemAutomataEquivalence}.
    
    The input of {\problemDisEquivalence} is \( (F_{\text{joint}},a,h) \) and as described in Theorem~\ref{thm:DisEqToGraphWalks} the reduction to {\problemGraphPathLength} is the identity function, yielding the same tuple.
    Following Theorem~\ref{thm:maingraph}, the reduction \( \problemGraphPathLength \leq_{p} \problemAutomataEquivalence \) computes for \( (F_{\text{joint}},a,h) \) the two \( \mathbb{Q} \)-automata \( \mathcal{A}^{a}_{{\text{joint}}}=\mathcal{A}^{a}_{F_{\text{joint}}} \) and \( \mathcal{A}^{h}_{{\text{joint}}} = \mathcal{A}^{h}_{F_{\text{joint}}}  \).
    As \( a \) is the only accepting state in \( \mathcal{A}^{a}_{{\text{joint}}} \), it is sufficient to consider only the connected component of \( \mathcal{A}^{a}_{{\text{joint}}} \) which contains \( a \), which we denote by \( \mathcal{A}_{a} \). Analogously, we denote with  \( \mathcal{A}_{h} \) the restriction of \( \mathcal{A}^{h}_{{\text{joint}}} \) to the connected component which contains \( h \).
    The automata \( \mathcal{A}_{a} \) and \( \mathcal{A}_{h} \) are presented in Figure~\ref{fig:fullexampleleftright}.
    Following the line of reductions in Theorem~\ref{thm:DisEqToGraphWalks} and Theorem~\ref{thm:maingraph}, we have $a\, \simeq^\text{dbs}_F\, h$ exactly when \( \mathcal{A}_{a} \) and \( \mathcal{A}_{h} \) are equivalent.
	
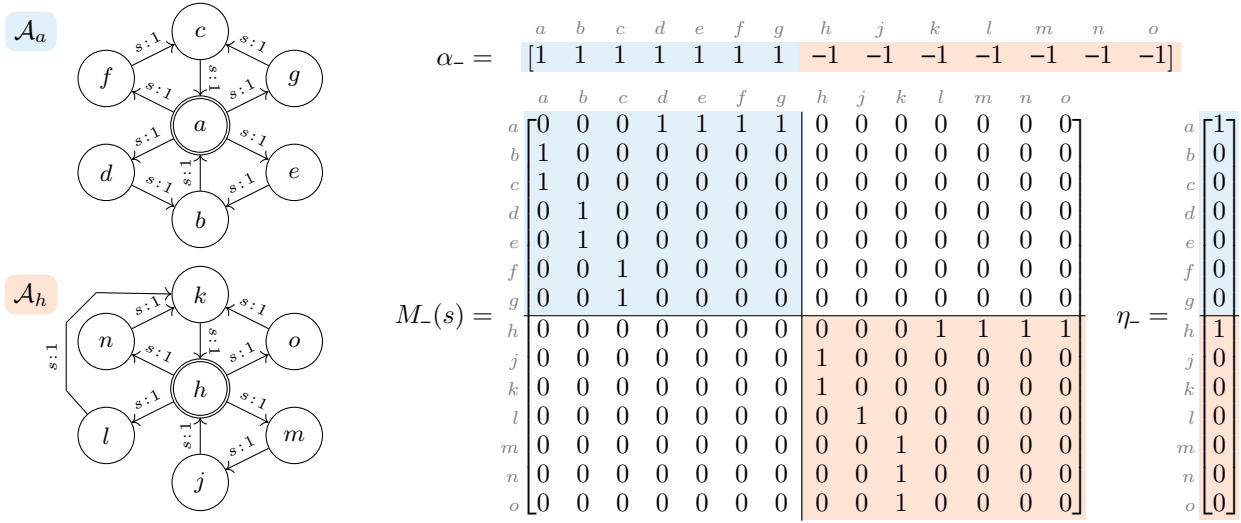
\begin{figure*}[t]
\begin{minipage}[c]{0.3\textwidth}
        \centering
        \begin{tikzpicture}[every initial by arrow/.style={initial distance=1em}]
            \tikzstyle{withsplit} = [circle, draw, minimum size= 0.75cm, inner sep=1,font=\small]
            \begin{scope}
                \node[rectangle,rounded corners,fill=CornflowerBlue!15] at (-2.25,1.25) {\( \mathcal{A}_{a} \)};
                \node[withsplit,accepting] (a) at (0,0)  {$a$\nodepart{lower} $1$};
                \node[withsplit] (b) at (0,-1.25)  {$b$\nodepart{lower}$0$};
                \node[withsplit] (c) at (0,1.25)  {$c$\nodepart{lower}$0$};
                \node[withsplit] (d) at (-1.25,-0.625)  {$d$\nodepart{lower}$0$};
                \node[withsplit] (e) at (1.25,-0.625)  {$e$\nodepart{lower}$0$};
                \node[withsplit] (f) at (-1.25,0.625)  {$f$\nodepart{lower}$0$};
                \node[withsplit] (g) at (1.25,0.625)  {$g$\nodepart{lower}$0$};

                \path[->] (a) edge node [above, sloped] {\tiny $s\!:\!1$} (e);
                \path[->] (a) edge node [above, sloped] {\tiny $s\!:\!1$} (d);
                \path[->] (a) edge node [above, sloped] {\tiny $s\!:\!1$} (g);
                \path[->] (a) edge node [above, sloped] {\tiny $s\!:\!1$} (f);
                
                \path[->] (b) edge node [above, sloped] {\tiny $s\!:\!1$} (a);
                \path[->] (c) edge node [above, sloped] {\tiny $s\!:\!1$} (a);
                
                \path[->] (d) edge node [above, sloped] {\tiny $s\!:\!1$} (b);
                \path[->] (e) edge node [above, sloped] {\tiny $s\!:\!1$} (b);
                \path[->] (f) edge node [above, sloped] {\tiny $s\!:\!1$} (c);
                \path[->] (g) edge node [above, sloped] {\tiny $s\!:\!1$} (c);
            \end{scope}
            
            \begin{scope}[yshift=-3.5cm]
                \node[rectangle,rounded corners,fill=RedOrange!15] at (-2.25,1.25) {\( \mathcal{A}_{h} \)};
                \node[withsplit,accepting] (a) at (0,0)  {$h$\nodepart{lower}$1$};
                \node[withsplit] (b) at (0,-1.25)  {$j$\nodepart{lower}$0$};
                \node[withsplit] (c) at (0,1.25)  {$k$\nodepart{lower}$0$};
                \node[withsplit] (d) at (-1.25,-0.625)  {$l$\nodepart{lower}$0$};
                \node[withsplit, initial text={\tiny $1$}] (e) at (1.25,-0.625)  {$m$\nodepart{lower}$0$};
                \node[withsplit] (f) at (-1.25,0.625)  {$n$\nodepart{lower}$0$};
                \node[withsplit] (g) at (1.25,0.625)  {$o$\nodepart{lower}$0$};

                \path[->] (a) edge node [above, sloped] {\tiny $s\!:\!1$} (e);
                \path[->] (a) edge node [above, sloped] {\tiny $s\!:\!1$} (d);
                \path[->] (a) edge node [above, sloped] {\tiny $s\!:\!1$} (g);
                \path[->] (a) edge node [above, sloped] {\tiny $s\!:\!1$} (f);
                
                \path[->] (b) edge node [above, sloped] {\tiny $s\!:\!1$} (a);
                \path[->] (c) edge node [above, sloped] {\tiny $s\!:\!1$} (a);

                \coordinate (im1) at ([xshift=-0.15cm,yshift=-0.65cm]f.west);
                \coordinate (im2) at ([xshift=-0.15cm,yshift=0.25cm]f.west);
                \coordinate (im3) at ([xshift=0.25cm,yshift=0.65cm]f.west);
                \draw[-] (d) -- (im1) edge[-] node [above, sloped] {\tiny $s\!:\!1$}  (im2);
                \draw[-] (im2) -- (im3) edge[->] (c);
                \path[->] (e) edge node [above, sloped] {\tiny $s\!:\!1$} (b);
                \path[->] (f) edge node [above, sloped] {\tiny $s\!:\!1$} (c);
                \path[->] (g) edge node [above, sloped] {\tiny $s\!:\!1$} (c);
            \end{scope}
        \end{tikzpicture}
\end{minipage}
\begin{minipage}[c]{0.69\textwidth}
        \begin{align*}        
            \alpha_{\text{\textminus}} & = \hspace{1em} \begin{bNiceArray}{cccccccccccccc}[first-row,
                code-for-first-row=\color{gray}\scriptstyle]
                \CodeBefore
                \rectanglecolor{CornflowerBlue!15}{1-1}{1-7}
                \rectanglecolor{RedOrange!15}{1-8}{1-14}
                \Body
                a & b & c & d & e & f & g & h  & j  & k  & l & m & n & o \\
                1 & 1 & 1 & 1 & 1 & 1 & 1 & \text{\textminus}1 & \text{\textminus}1 & \text{\textminus}1 & \text{\textminus}1 & \text{\textminus}1 & \text{\textminus}1 & \text{\textminus}1 
            \end{bNiceArray} &&\\
            M_{\text{\textminus}}(s) & =  \begin{bNiceArray}{ccccccc|ccccccc}[first-row,first-col,
                code-for-first-col=\color{gray}\scriptstyle,
                code-for-first-row=\color{gray}\scriptstyle]
                \CodeBefore
                \rectanglecolor{CornflowerBlue!15}{1-1}{7-7}
                \rectanglecolor{RedOrange!15}{8-8}{14-14}
                \Body
                & a & b & c & d & e & f & g & h  & j  & k  & l & m & n & o \\
                a & 0 & 0 & 0 & 1 & 1 & 1 & 1 & 0 & 0 & 0 & 0 & 0 & 0 & 0 \\
                b & 1 & 0 & 0 & 0 & 0 & 0 & 0 & 0 & 0 & 0 & 0 & 0 & 0 & 0 \\
                c & 1 & 0 & 0 & 0 & 0 & 0 & 0 & 0 & 0 & 0 & 0 & 0 & 0 & 0 \\
                d & 0 & 1 & 0 & 0 & 0 & 0 & 0 & 0 & 0 & 0 & 0 & 0 & 0 & 0 \\
                e & 0 & 1 & 0 & 0 & 0 & 0 & 0 & 0 & 0 & 0 & 0 & 0 & 0 & 0 \\
                f & 0 & 0 & 1 & 0 & 0 & 0 & 0 & 0 & 0 & 0 & 0 & 0 & 0 & 0 \\
                g & 0 & 0 & 1 & 0 & 0 & 0 & 0 & 0 & 0 & 0 & 0 & 0 & 0 & 0 \\\hline
                h & 0 & 0 & 0 & 0 & 0 & 0 & 0 & 0 & 0 & 0 & 1 & 1 & 1 & 1 \\
                j & 0 & 0 & 0 & 0 & 0 & 0 & 0 & 1 & 0 & 0 & 0 & 0 & 0 & 0 \\
                k & 0 & 0 & 0 & 0 & 0 & 0 & 0 & 1 & 0 & 0 & 0 & 0 & 0 & 0 \\
                l & 0 & 0 & 0 & 0 & 0 & 0 & 0 & 0 & 1 & 0 & 0 & 0 & 0 & 0 \\
                m & 0 & 0 & 0 & 0 & 0 & 0 & 0 & 0 & 0 & 1 & 0 & 0 & 0 & 0 \\
                n & 0 & 0 & 0 & 0 & 0 & 0 & 0 & 0 & 0 & 1 & 0 & 0 & 0 & 0 \\
                o & 0 & 0 & 0 & 0 & 0 & 0 & 0 & 0 & 0 & 1 & 0 & 0 & 0 & 0       
            \end{bNiceArray}
            \hspace{1em}
            \eta_{\text{\textminus}} = \begin{bNiceArray}{c}[first-col,
                code-for-first-col=\color{gray}\scriptstyle]
                \CodeBefore
                \rectanglecolor{CornflowerBlue!15}{1-1}{7-1}
                \rectanglecolor{RedOrange!15}{8-1}{14-1}
                \Body
                a & 1 \\
                b & 0 \\
                c & 0 \\
                d & 0 \\
                e & 0 \\
                f & 0 \\
                g & 0 \\\hline
                h & 1 \\
                j & 0 \\
                k & 0 \\
                l & 0 \\
                m & 0 \\
                n & 0 \\
                o & 0   
            \end{bNiceArray}
        \end{align*}
\end{minipage}
\caption{Automata $\mathcal{A}_{a} $ and $\mathcal{A}_{h} $ (left) and the linear representation \( \tuple{\alpha_{\text{\textminus}},M_{\text{\textminus}},\eta_{\text{\textminus}}} \) of the automaton $\mathcal{A}_{\text{\textminus}} $ (right) from Example~\ref{ex:fullexample}.\\ Left side:
    An edge with label \( s\!:\!1 \) indicates that transitioning from the source to the target when reading \( s \) has weight \( 1 \). Edges with weight zero are omitted.
    Initial weights and final weights of the states are not explicitly represented; each state has an initial weight of $1$, and each state has a final weight of $0$, except for double-circled states, which have a final weight of $1$ (\enquote{accepting state}).}
\label{fig:fullexampleleftright} \end{figure*}

Next, we decide equivalence of \( \mathcal{A}_{a} \) and \( \mathcal{A}_{h} \).
For that, we compute an automaton \( \mathcal{A}_{\text{\textminus}} \), that is empty exactly when \( \mathcal{A}_{a} \) and \( \mathcal{A}_{h} \) are equivalent.
The construction of \( \mathcal{A}_{\text{\textminus}} \) is given in Section~\ref{sec:QautomataEQ}, and the resulting linear representation \( \tuple{\alpha_{\text{\textminus}},M_{\text{\textminus}},\eta_{\text{\textminus}}} \) of the automaton \( \mathcal{A}_{\text{\textminus}} \) is given in Figure~\ref{fig:fullexampleleftright}.
Basically, \( \mathcal{A}_{\text{\textminus}} \) is the union of \( \mathcal{A}_{a} \) and \( \mathcal{A}_{h} \) where the initial values of states of \( \mathcal{A}_{\text{\textminus}} \) is $-1$ instead of $1$.
The base \( \mathcal{B}_{\text{\textminus}} = \{ b_1,b_2,b_3,b_4 \} \) of the forward space $\mathcal{F}_{\text{\textminus}} = \mathcal{F}_{\mathcal{A}_{\text{\textminus}}} = \mathrm{span}(\{ \alpha_{\text{\textminus}} \cdot M_{\text{\textminus}}(w) \mid w \in \Sigma^* \}) $ is:
\vspace{-0.5em}
\newcommand{\tm}{\text{\textminus}}
\begin{align*}
	&\begin{NiceArray}{p{1.05em}p{0.5em}p{0.5em}p{0.5em}p{0.5em}p{0.5em}p{0.5em}p{0.6em}p{0.6em}p{0.6em}p{0.6em}p{0.6em}p{0.6em}p{2em}}[first-row,
		code-for-first-row=\color{gray}\scriptsize]
		\hspace{0.5em}a & b & c & d & e & f & g & \,h  & \,j  & \,k  & \,l & \,m & \,n & \,o 
	\end{NiceArray}\\[-0.5em]
	b_1 = 
	& \begin{NiceArray}{p{1.05em}p{0.5em}p{0.5em}p{0.5em}p{0.5em}p{0.5em}p{0.5em}p{0.6em}p{0.6em}p{0.6em}p{0.6em}p{0.6em}p{0.6em}p{2em}}
		{[} 1 & 1 & 1 & 1 & 1 & 1 & 1 & \tm1 & \tm1 & \tm1 & \tm1 & \tm1 & \tm1 & \tm1 {]}
	\end{NiceArray} \\
	b_2 =& \begin{NiceArray}{p{1.05em}p{0.5em}p{0.5em}p{0.5em}p{0.5em}p{0.5em}p{0.5em}p{0.6em}p{0.6em}p{0.6em}p{0.6em}p{0.6em}p{0.6em}p{2em}}
		{[} 2 & 2 & 2 & 1 & 1 & 1 & 1 & \tm2 & \tm1 & \tm3 & \tm1 & \tm1 & \tm1 & \tm1 {]}
	\end{NiceArray} \\
	b_3 =& \begin{NiceArray}{p{1.05em}p{0.5em}p{0.5em}p{0.5em}p{0.5em}p{0.5em}p{0.5em}p{0.6em}p{0.6em}p{0.6em}p{0.6em}p{0.6em}p{0.6em}p{2em}}
		{[} 4 & 2 & 2 & 2 & 2 & 2 & 2 & \tm4 & \tm1 & \tm3 & \tm2 & \tm2 & \tm2 & \tm2 {]}
	\end{NiceArray} \\
	b_4 =& \begin{NiceArray}{p{1.05em}p{0.5em}p{0.5em}p{0.5em}p{0.5em}p{0.5em}p{0.5em}p{0.6em}p{0.6em}p{0.6em}p{0.6em}p{0.6em}p{0.6em}p{2em}}
		{[} 4 & 4 & 4 & 4 & 4 & 4 & 4 & \tm4 & \tm2 & \tm6 & \tm4 & \tm4 & \tm4 & \tm4 {]}
	\end{NiceArray}
\end{align*}
For each base $b_i$, with $i\in\{1,2,3,4\}$, the word $w_i=s^{i-1}$ is such that \( b_i=\alpha_{\text{\textminus}} \cdot M_{\text{\textminus}}(w_i) \).
Recall that \( \eta_{\text{\textminus}} \) is orthogonal to \( \mathcal{F}_{\text{\textminus}} \) exactly when for all bases \( b \in \mathcal{B}_{\text{\textminus}} \) holds \( b \cdot \eta_{\text{\textminus}} = 0 \). This is the case, as \( b \cdot \eta_{\text{\textminus}} = 0 \) holds for all bases \( b \in \mathcal{B}_{\text{\textminus}} \). Because \( \eta_{\text{\textminus}} \) is orthogonal to \( \mathcal{F}_{\text{\textminus}} \), we obtain that \( \mathcal{A}_{\text{\textminus}} \) is empty. Consequently, \( \mathcal{A}_1 \) and \( \mathcal{A}_2 \) are equivalent and we obtain \( a\, \simeq^\text{dbs}_F\, h \). %
\hfill{\KSendexample}
\end{example}

\begin{algorithm}[t]
	\KwIn{\acr{AAF} $F=(A,R)$ and arguments \( a,b \in A \).}
	\KwOut{\texttt{Yes}, if $a\, \succeq^\text{dbs}_F\, b$ holds; otherwise, \texttt{No}.}
	    \( i \leftarrow 1 \);\
        \( M \leftarrow \mathrm{AdjacencyMatrix}(F) \)\;
        \While(\tcp*[f]{Theorem~\ref{thm:boundForDisEquiv}}){\( i \leq 2|A| -1 \)}{
                \lIf{\( M[a] < M[b] \) and \( i \) is odd}{\Return{\texttt{Yes}}}
                \lIf{\( M[b] < M[a] \) and \( i \) is even}{\Return{\texttt{Yes}}}
                \lIf{\( M[b] \neq M[a] \)}{\Return{\texttt{No}}}
                \( M \leftarrow M \cdot \mathrm{AdjacencyMatrix}(F) \) \tcp*{\( M^{i+1} \)}
                \( i \leftarrow i + 1 \)\;
            }
        \Return{\texttt{Yes}}\;
            \caption{Algorithm for deciding {\problemDisStronger}.}
            \label{alg:naiveDisStronger}
\end{algorithm}
A careful reconsideration of the above sequence of reductions from {\problemDisEquivalence} to {\problemAutomataEquivalence} reveals that whenever for two arguments \( a,b \) their discussion count differs, then the smallest index in which the infinite vectors \( \Dis^F(a) \) and \( \Dis^F(b) \) are different is bound linear in the size of \( F \).
\begin{theorem}
    \label{thm:boundForDisEquiv}
    Let $F=(A,R)$ be an \acr{AAF} and let \( a,b \) be two arguments of \( F \).
 If $a\, \not\simeq^\text{dbs}_F\, b$ holds, then there is some \( i \in \mathbb{N} \)  with
 \( i \leq 2|A|-1 \) such that \( \Dis^F_i(a) \neq \Dis^F_i(b) \).
\end{theorem}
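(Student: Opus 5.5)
The plan is to reduce the statement to Theorem~\ref{thm:limitgraph}, reading the \acr{AAF} $F=(A,R)$ as a directed graph $G=(A,R)$. First, unfold the hypothesis. By definition of $\simeq^\text{dbs}_F$, $a \not\simeq^\text{dbs}_F b$ means that $\Dis^F(a) \geq^\text{lex} \Dis^F(b)$ and $\Dis^F(b) \geq^\text{lex} \Dis^F(a)$ do not both hold. If $\Dis^F_i(a)=\Dis^F_i(b)$ held for all $i$, then both lexicographic comparisons would hold by the first disjunct in the definition of $\geq^\text{lex}$, and so $a \simeq^\text{dbs}_F b$, which is a contradiction. Hence there is some index where the discussion counts differ. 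This is the same observation used in the proof of Theorem~\ref{thm:DisEqToGraphWalks}.

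Second, translate differences in $\Dis^F$ into differences in walk counts. For each $i$, $\Dis^F_i(x) = (-1)^i |\Paths{x}{F}{i}|$, with the same sign factor for $a$ and $b$. Therefore $\Dis^F_i(a) \neq \Dis^F_i(b)$ holds if and only if $|\Paths{a}{F}{i}| \neq |\Paths{b}{F}{i}|$. By the first step, $a$ and $b$ therefore do not agree on the numbers of walks in $G$.

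Third, apply Theorem~\ref{thm:limitgraph} to $G$ with $\vertexa=a$ and $\vertexb=b$. It gives that the least $i$ with $|\Paths{a}{}{i}| \neq |\Paths{b}{}{i}|$ satisfies $i \leq 2|A|-1$. Transferring back through the equivalence from the second step yields $\Dis^F_i(a)\neq\Dis^F_i(b)$ for this $i$.

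The only substantive ingredient is Theorem~\ref{thm:limitgraph}, which itself rests on Lemma~\ref{lem:eqgraphautomata} and on the witness-length bound $|Q_1|+|Q_2|-1 = 2|A|-1$ from Proposition~\ref{prop:complxityAutomataStuffWitness}. Since it is stated earlier, I expect no real obstacle here. The one point to check is an off-by-one issue: the witness word $s^i$ produced by Proposition~\ref{prop:complxityAutomataStuffWitness} could in principle be the empty word, $i=0$. That case cannot occur, because $\mathcal{A}^{a}_G(a,\varepsilon)=1=\mathcal{A}^{b}_G(b,\varepsilon)$, so the two automata agree on $\varepsilon$. Any witness therefore has $i\in\mathbb{N}$ with $1\le i\le 2|A|-1$, which is what is needed here.
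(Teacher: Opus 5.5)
Your proposal is correct and matches the paper's proof. The paper likewise turns $a \not\simeq^{\text{dbs}}_F b$ into "$a$ and $b$ do not agree on the numbers of walks" via Theorem~\ref{thm:DisEqToGraphWalks} and then invokes Theorem~\ref{thm:limitgraph}; your extra check that the witness word cannot be $\varepsilon$ (both automata have weight $1$ on $\varepsilon$) is correct and is a detail the paper leaves implicit.
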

\begin{proof}
Due to Theorem~\ref{thm:DisEqToGraphWalks}, we have that $a\, \not\simeq^\text{dbs}_F\, b$ holds exactly when \( a \) and \( b \) do not agree on the number of walks. For the latter, due to  Theorem~\ref{thm:limitgraph}, the smallest index \( i \), such that \( |\Paths{a}{}{i}| \neq |\Paths{b}{}{i}| \) holds is bound by \( 2|A|-1 \).
\end{proof}
We consider Theorem~\ref{thm:boundForDisEquiv} and Theorem~\ref{thm:limitgraph} as one of the most central insights of this paper. First, because there seems to be no easy way to prove the bounds that are provided by these theorems.
Moreover, Theorem~\ref{thm:boundForDisEquiv} enables us to decide {\problemDisStronger} in polynomial time.
Such an algorithm for deciding  {\problemDisStronger} is sketched in Algorithm~\ref{alg:naiveDisStronger}.
\begin{theorem}\label{thm:strongdisPTIME}
    {\problemDisStronger} is in \textbf{PTIME}.
\end{theorem}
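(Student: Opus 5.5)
The plan is to show that Algorithm~\ref{alg:naiveDisStronger} correctly decides {\problemDisStronger} and that it runs in polynomial time. The correctness argument rests on Theorem~\ref{thm:boundForDisEquiv}, and the running-time argument on a bound for the entries of powers of the adjacency matrix.

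\emph{Correctness.} By Propositions~\ref{prop:walkTn} and~\ref{prop:walks_matrixpotency}, in the $i$-th iteration the value $M[x]$ for the current matrix $M=M_F^i$ equals $|\Paths{x}{F}{i}|$. Hence $\Dis^F_i(x)=(-1)^i M[x]$.

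Consider first the case that the algorithm returns at some index $i\le 2|A|-1$. This happens only at the first index where $M^i[a]\neq M^i[b]$, because all earlier iterations passed all three tests and therefore had equal values. For odd $i$, the condition $M^i[a]<M^i[b]$ is equivalent to $\Dis^F_i(a)>\Dis^F_i(b)$. For even $i$, the condition $M^i[b]<M^i[a]$ is equivalent to $\Dis^F_i(a)>\Dis^F_i(b)$. So the algorithm answers \texttt{Yes} exactly when the first differing coordinate favours $a$, which is the definition of $\geq^\text{lex}$. Otherwise it answers \texttt{No}, which is correct because the first difference then favours $b$.

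Consider next the case that the loop finishes. Then $\Dis^F_i(a)=\Dis^F_i(b)$ for all $i\le 2|A|-1$. By the contrapositive of Theorem~\ref{thm:boundForDisEquiv}, this gives $a\simeq^\text{dbs}_F b$, so $\Dis^F(a)=\Dis^F(b)$ and thus $a\succeq^\text{dbs}_F b$. The final answer \texttt{Yes} is therefore correct.

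\emph{Running time.} There are at most $2|A|-1$ iterations, and each performs one matrix multiplication of $|A|\times|A|$ matrices plus column sums. The only step needing care is the bit length of the numbers involved; I expect this to be the main (though mild) obstacle.

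The entries of $M^i$ count walks of length $i$ between two fixed vertices. Each step of such a walk has at most $|A|$ choices, so every entry is at most $|A|^i\le |A|^{2|A|}$. This has $O(|A|\log|A|)$ bits, which is polynomial, and the column sums add only $O(\log|A|)$ further bits. Consequently, each arithmetic operation is polynomial in the input size, the number of operations is polynomial, and so the whole algorithm runs in polynomial time. This establishes that {\problemDisStronger} is in \textbf{PTIME}.
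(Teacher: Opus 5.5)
Your proposal is correct and follows the paper's proof. Both show that Algorithm~\ref{alg:naiveDisStronger} is correct via Proposition~\ref{prop:walks_matrixpotency} and Theorem~\ref{thm:boundForDisEquiv}, and that it runs in polynomial time because it performs at most $2|A|-1$ multiplications of $|A|\times|A|$ matrices; your explicit lexicographic case analysis and your $O(|A|\log|A|)$ bit-length bound on the entries of $M^i$ fill in details that the paper leaves implicit.
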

\begin{proof}
Matrix multiplication is known to be in polynomial time~\cite{KS_Strassen1969}. 
Hence, because $M$ is of size $|A|\times|A|$, all checks and multiplications in Algorithm~\ref{alg:naiveDisStronger} are computable in polynomial time in $|A|$. The central loop is executed at most $2|A|-1$ times, rendering the whole runtime of Algorithm~\ref{alg:naiveDisStronger} to be in polynomial time.
Correctness of Algorithm~\ref{alg:naiveDisStronger} follows from Proposition~\ref{prop:walks_matrixpotency} and Theorem~\ref{thm:boundForDisEquiv}.
\end{proof}

\begin{example}
\label{ex:strongerexample}
Let \refstepcounter{afexample} $ F_{\theafexample} = (A_{\ref{afex:strongerexample}},R_{\ref{afex:strongerexample}}) $\label{afex:strongerexample} be the \acr{AAF} with
\begin{align*}
    A_{\ref{afex:strongerexample}}& = \{a,b,c,d\}\\
    R_{\ref{afex:strongerexample}} & = \{(a,d), (b,a), (b,d), (c,a), (c,b), (c,c), (d,b) \}\ ,
\end{align*}
depicted in Figure \ref{fig:strongerexample} together with its adjacency matrix \( M \).
We show that $a\, \succeq^\text{dbs}_{F_{\ref{afex:strongerexample}}}\, b$ holds by executing Algorithm~\ref{alg:naiveDisStronger}.
\begin{figure}[t]
\centering
\begin{minipage}[c]{0.49\columnwidth}
    \centering
    \begin{tikzpicture}
        \node (b) at (0,0) [circle, draw,fill=Thistle!15,minimum size= 0.65cm] {$a$};
        \node (a) at (1.5,1.5) [circle, draw,fill=LimeGreen!15,minimum size= 0.65cm] {$b$};   
        \node (c) at (0,1.5) [circle, draw,minimum size= 0.65cm] {$c$}; 
        \node (d) at (1.5,0) [circle, draw,minimum size= 0.65cm] {$d$};

        \draw[->] (c) edge (b);
        \draw[->] (a) edge (b);
        \draw[->] (c) to [out=220,in=250,looseness=8] (c);
        \draw[->] (c) edge (a);
        \draw[->] (d) edge[bend left=12] (a);
        \draw[->] (b) edge (d);
        \draw[->] (a) edge[bend left=12] (d);
    \end{tikzpicture}
\end{minipage}
\begin{minipage}[c]{0.49\columnwidth}
    \centering
    \( M = \begin{bNiceArray}{cccc}[first-row,first-col,
        code-for-first-col=\color{gray}\scriptstyle,
        code-for-first-row=\color{gray}\scriptstyle,
        columns-width = auto]
        \CodeBefore
        \columncolor{Thistle!15}{1}
        \columncolor{LimeGreen!15}{2}
        \Body
          & a & b & c & d \\ 
        a & 0 & 0 & 0 & 1 \\
        b & 1 & 0 & 0 & 1 \\
        c & 1 & 1 & 1 & 0 \\
        d & 0 & 1 & 0 & 0 \\           
    \end{bNiceArray} \)
\end{minipage}
\caption{\acr{AAF} $F_{\ref{afex:strongerexample}}$ from Example~\ref{ex:strongerexample} and its adjacency matrix.}
\label{fig:strongerexample} \end{figure}
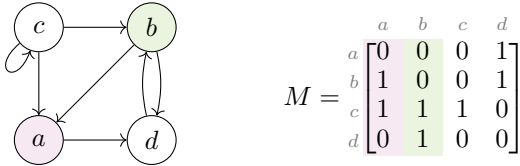
Algorithm~\ref{alg:naiveDisStronger} computes the respective matrix power \( M^i \) of the adjacency matrix succinctly.
This is because Proposition~\ref{prop:walks_matrixpotency} guarantees that \( M^i[\vertexa] \), i.e., the sum of the respective column vector of $\vertexa$,  equals \( |\Paths{\vertexa}{}{i}| \). 
The powers \( M^2 \) and \( M^3 \) are:
\begin{align*}
M^2 & = \begin{bNiceArray}{cccc}[first-col,
    code-for-first-col=\color{gray}\scriptstyle,
    first-row,code-for-first-row=\color{gray}\scriptstyle,
    columns-width = auto]
    \CodeBefore
    \columncolor{Thistle!15}{1}
    \columncolor{LimeGreen!15}{2}
    \Body
      & a & b & c & d \\ 
    a & 0 & 1 & 0 & 0 \\
    b & 0 & 1 & 0 & 1 \\
    c & 2 & 1 & 1 & 2 \\
    d & 1 & 0 & 0 & 1 \\             
\end{bNiceArray}
&
M^3 & = \begin{bNiceArray}{cccc}[first-col,
    code-for-first-col=\color{gray}\scriptstyle,
    first-row,code-for-first-row=\color{gray}\scriptstyle,
    columns-width = auto]
    \CodeBefore
    \columncolor{Thistle!15}{1}
    \columncolor{LimeGreen!15}{2}
    \Body
      & a & b & c & d \\ 
    a & 1 & 0 & 0 & 1 \\
    b & 1 & 1 & 0 & 1 \\
    c & 2 & 3 & 1 & 3 \\
    d & 0 & 1 & 0 & 1 \\            
\end{bNiceArray}        
\end{align*}
As one can check we have \( M[a]=2=M[b] \) and \( M^2[a]=3=M^2[b] \).
When the central loop of Algorithm~\ref{alg:naiveDisStronger} is executed the third time, we obtain \( M^3[a]=4 < 5=M^3[b] \) and, because $3$ is odd, Algorithm~\ref{alg:naiveDisStronger} returns \texttt{Yes}.
This corresponds to $a\, \succeq^\text{dbs}_{F_{\ref{afex:strongerexample}}}\, b$, which holds because we have:
\begin{equation*}
    \Dis^{F_{\!\ref{afex:strongerexample}}}(a) = \tuple{-2,3,-4,...} \geq^\text{lex} \tuple{-2,3,-5,...} =\Dis^{F_{\!\ref{afex:strongerexample}}}(b) 
\end{equation*}
\end{example}

\begin{algorithm}[t]
	\KwIn{\acr{AAF} $F=(A,R)$ and arguments \( a,b \in A \).}
	\KwOut{\texttt{Yes}, if $a\, \simeq^\text{dbs}_F\, b$ holds; otherwise, \texttt{No}.}
	\( i \leftarrow 1 \);\
	\( M \leftarrow \mathrm{AdjacencyMatrix}(F) \)\;
	\While(\tcp*[f]{Theorem~\ref{thm:boundForDisEquiv}}){\( i \leq 2|A| -1 \)}{
		\lIf{\( M[b] \neq M[a] \)}{\Return{\texttt{No}}}
		\( M \leftarrow M \cdot \mathrm{AdjacencyMatrix}(F) \) \tcp*{\( M^{i+1} \)}
		\( i \leftarrow i + 1 \)\;
	}
	\Return{\texttt{Yes}}\;
	\caption{Algorithm for deciding {\problemDisEquivalence}.}
	\label{alg:naiveDisEQ}
\end{algorithm}
With Theorem~\ref{thm:strongdisPTIME} established, one easily adapts Algorithm~\ref{alg:naiveDisStronger} to Algorithm~\ref{alg:naiveDisEQ} which computes {\problemDisEquivalence}.
Finally, the reduction from Section~\ref{sec:Graph2Semiring} also shows ${\problemGraphPathLength} \leq_p {\problemAutomataEquivalenceZ}$, whereby the latter is the equivalence problem of semiring automata over the integers, which is also in \textbf{PTIME} \cite{KS_BealLombardySakarovitch2005}. However, there is no analogue to Proposition~\ref{prop:complxityAutomataStuffWitness} known for \( \mathbb{Z} \) and hence, using \( \mathbb{Q} \) is vital to identify the complexity of {\problemDisStronger}.

\section{Conclusion}
\label{sec:conclusion}
We settled a computational question that had long been unanswered, namely the question of decidability and computational complexity of the discussion-based semantics in abstract argumentation. We showed that deciding whether two arguments are ranked equally or whether one argument is ranked strictly better than the other are both decidable in polynomial time. In particular, we showed that it suffices to explore walks up to length $2|A|-1$ in order to correctly classify the relation of two arguments, thus providing the means to prove correctness for algorithms such as the one from  \cite{Bonzon:2023}. Our result made use of insights from automata theory, in particular regarding the question of equivalence of automata over semirings. Thus, our result provides a means to link abstract argumentation research to graph theory and automata theory.

Although we focused our analysis on the discussion-based semantics of \citeauthor{DBLP:conf/sum/AmgoudB13a} (\citeyear{DBLP:conf/sum/AmgoudB13a}), the results of this work are relevant for a broad range of ranking-based semantics, in particular those that are defined on lengths of walks. For example, the \emph{burden-based} semantics of \cite{DBLP:conf/sum/AmgoudB13a}, the larger family of \emph{propagation-based} semantics \cite{Bonzon:2016}, and the \emph{tuples} approach \cite{Cayrol:2005} all are based on (information flowing through) walks and values determined from these walks. In particular, all these semantics highly depend on numbers of walks of different lengths. For future work, we aim to generalise the result from this paper to these approaches.

\appendix

%
%
%

%
\bibliographystyle{named}
\bibliography{ijcai26}

\end{document}